\documentclass[11pt]{article}
 \usepackage{amsmath, amsthm, amssymb, bbm, setspace,bigints}
 \usepackage[margin=1 in]{geometry}
\usepackage{caption}
\usepackage{subcaption}
\usepackage[toc,page]{appendix}
\usepackage{cite}         
\usepackage[pdftex]{graphicx}
\usepackage{pdfpages}
\usepackage{epstopdf}
\usepackage{booktabs}
\usepackage{amsmath,tikz,url}
\usetikzlibrary{arrows.meta,positioning}
\usepackage{authblk}

\usepackage{amsthm,bbm,amssymb}
\usepackage{amsmath}
\usepackage{natbib}
\usepackage[colorlinks,citecolor=blue,urlcolor=blue,filecolor=blue,backref=page]{hyperref}
\usepackage{graphicx}

\usepackage[utf8]{inputenc}
\usepackage{amsmath,amssymb,natbib,xcolor,multicol,url,mhequ,amsthm}
\usepackage{graphicx,bbm,xr}
\usepackage{booktabs,epstopdf,color}
\usepackage[space]{grffile}
\usepackage{lineno}
\usepackage[plain,noend]{algorithm2e}
\usepackage{multirow}

\def\T{\mathrm{\scriptscriptstyle{T}}}
\newtheorem{proposition}{Proposition}

\def\mb{\mathbb}

\def\T{{\mathrm{\scriptscriptstyle T} }}

\def\H{{\mathrm{\scriptscriptstyle H} }}
\newcommand{\mi}{\mathrm{i}}

\newcommand{\be}{\begin{equs}}
\newcommand{\ee}{\end{equs}}

\numberwithin{equation}{section}
\theoremstyle{plain}
\newtheorem{theorem}{Theorem}
\newtheorem{assumption}{Assumption}
\newtheorem{remark}{Remark}

\newtheorem{lemma}{Lemma}

\title{Stability of Sequential and Parallel Coordinate Ascent Variational Inference}

\author[1]{Debdeep Pati\thanks{dpati2@wisc.edu}}
\affil[1]{Department of Statistics,  University of Wisconsin-Madison,  Madison, WI 53706}

\begin{document}
\maketitle

\begin{abstract}
We highlight a striking difference in behavior between two widely used variants of coordinate ascent variational inference: the sequential and parallel algorithms. While such differences were known in the numerical analysis literature in simpler settings, they remain largely unexplored in the optimization-focused literature on variational inference in more complex models. Focusing on the moderately high-dimensional linear regression problem, we show that the sequential algorithm, although typically slower, enjoys convergence guarantees under more relaxed conditions than the parallel variant, which is often employed to facilitate block-wise updates and improve computational efficiency.

\end{abstract}

\noindent\textbf{Keywords:} Bayesian; Gauss-Siedel; Jacobi; parallel; regression; sequential; variational inference

\section{Introduction}

Variational inference has emerged over the past two decades as a scalable framework for approximate Bayesian computation. Despite its widespread empirical success \citep{blei2017variational} and strong statistical optimality guarantees \citep{pati2018statistical,wang2019frequentist,yang2020alpha,alquier2020concentration}, the theoretical understanding of its optimization landscape is still evolving. Recent advances have begun to shed light on this landscape in specific model settings \citep{wang2006convergence,zhang2017theoretical,mukherjee2018mean,ghorbani2018instability,plummer2020dynamics,celentano2021local,ghosh2022statistical}, for restricted variational families such as the mean-field approximation \citep{bhattacharya2025convergence}, and under structural assumptions on the posterior distribution, for instance, log-concavity \citep{arnese2024convergence}.

In this article, we revisit the coordinate ascent variational inference in mean-field inference for Bayesian linear regression \citep{10.1214/12-BA703}, motivated by its empirical success \citep{10.1214/12-BA703,zabad2023fast} and statistical optimality guarantees \citep{ray2022variational}. Since the only existing work on the convergence of coordinate ascent \citep{bhattacharya2025convergence} primarily focuses on the two-block case, little is known about the behavior of the coordinate-ascent algorithm  employed in 
\cite{10.1214/12-BA703}, which uses a $p$-block coordinate ascent where $p$ is the number of covariates. Coordinate ascent algorithms optimize the evidence lower bound by updating each variational factor sequentially while keeping the others fixed. Variants of this basic scheme have appeared widely in the literature, with two of the most prevalent being the \emph{sequential} and \emph{parallel} coordinate ascent algorithms \citep{huang2016variational}. In the sequential algorithm, when updating the density associated with coordinate \(i\) at iteration \(t\), the most recent updates for coordinates \(\{1, \ldots, i-1\}\) are used. In contrast, the parallel version updates all coordinates simultaneously using their values from iteration \((t-1)\). A schematic comparison of the two approaches is provided in Figure~\ref{pic:seq_par}. It is often argued that the parallel version naturally facilitates simultaneous updates across coordinates, which can yield improved scalability and computational efficiency.  It is worth emphasizing that, although Section 5 of \cite{bhattacharya2025convergence} addresses coordinate ascent algorithms with more than two blocks, the convergence result presented in Theorem 5.2 therein pertains exclusively to the parallel variant and does not distinguish between the sequential and parallel versions.

\begin{figure}[htbp]
\centering
\begin{minipage}{0.48\textwidth}
\centering
\begin{tikzpicture}[>=Stealth,thick,scale=0.5, every node/.style={transform shape}]

\node[circle,draw,fill=blue!70,inner sep=1.2pt,label=above:{\textcolor{blue}{$q_1^{(t)}$}}] (q1t) at (0,2) {};
\node[circle,draw,fill=blue!70,inner sep=1.2pt,label=below:{\textcolor{blue}{$q_2^{(t)}$}}] (q2t) at (1,0) {};

\draw[->] (q2t) -- (q1t);
\draw[->] (q1t) -- (q2t);

\node[circle,draw,fill=blue!70,inner sep=1.2pt,label=above:{\textcolor{blue}{$q_1^{(t+1)}$}}] (q1t1) at (3,2) {};
\node[circle,draw,fill=blue!70,inner sep=1.2pt,label=below:{\textcolor{blue}{$q_2^{(t+1)}$}}] (q2t1) at (4,0) {};

\draw[->] (q2t) -- (q1t1);
\draw[->] (q1t1) -- (q2t1);

\node[circle,draw,fill=blue!70,inner sep=1.2pt,label=above:{\textcolor{blue}{$q_1^{(t+2)}$}}] (q1t2) at (6,2) {};
\node[circle,draw,fill=blue!70,inner sep=1.2pt,label=below:{\textcolor{blue}{$q_2^{(t+2)}$}}] (q2t2) at (7,0) {};

\draw[->] (q2t1) -- (q1t2);
\draw[->] (q1t2) -- (q2t2);

\node[circle,draw,fill=blue!70,inner sep=1.2pt] (next) at (9,2) {};
\draw[->] (q2t2) -- (next);

\end{tikzpicture}
\end{minipage}%
\hfill
\begin{minipage}{0.48\textwidth}
\centering
\begin{tikzpicture}[>=Stealth,thick,scale=0.5, every node/.style={transform shape}]

\node[circle,draw,fill=blue!70,inner sep=1.2pt,label=above:{\textcolor{blue}{$q_1^{(t)}$}}] (q1t) at (0,2) {};
\node[circle,draw,fill=blue!70,inner sep=1.2pt,label=below:{\textcolor{blue}{$q_2^{(t)}$}}] (q2t) at (0,0) {};

\node[circle,draw,fill=blue!70,inner sep=1.2pt,label=above:{\textcolor{blue}{$q_1^{(t+1)}$}}] (q1t1) at (3,2) {};
\node[circle,draw,fill=blue!70,inner sep=1.2pt,label=below:{\textcolor{blue}{$q_2^{(t+1)}$}}] (q2t1) at (3,0) {};

\draw[->] (q1t) -- (q1t1);
\draw[->] (q2t) -- (q2t1);

\node[circle,draw,fill=blue!70,inner sep=1.2pt,label=above:{\textcolor{blue}{$q_1^{(t+2)}$}}] (q1t2) at (6,2) {};
\node[circle,draw,fill=blue!70,inner sep=1.2pt,label=below:{\textcolor{blue}{$q_2^{(t+2)}$}}] (q2t2) at (6,0) {};

\draw[->] (q1t1) -- (q1t2);
\draw[->] (q2t1) -- (q2t2);

\end{tikzpicture}
\end{minipage}
\caption{Comparison of sequential (left) and parallel (right) versions for $p=2$. }
\label{pic:seq_par}
\end{figure}
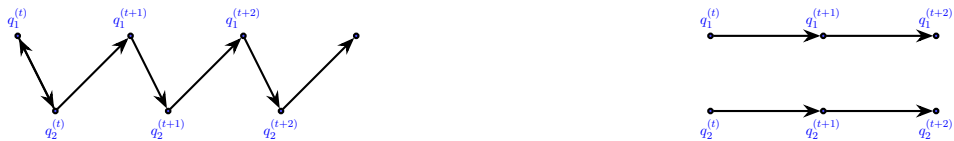

There has been limited understanding of the qualitative differences between these two coordinate ascent approaches, apart from a few illustrative examples. The first example arises when the target distribution is multivariate Gaussian. In this case, the sequential algorithm corresponds to the classical Gauss--Seidel iteration, whereas the parallel algorithm coincides with the Jacobi iteration, both of which are well-studied in numerical linear algebra \citep{golub2013matrix}. Notably, as shown in Theorems 10.1.1 and 10.1.2 and  of \cite{golub2013matrix}, the Gauss-Seidel iteration converges without additional assumptions on the target covariance, whereas the Jacobi iteration requires the covariance matrix to satisfy a diagonal dominance condition. This observation is further highlighted in Section 5 of \cite{bhattacharya2025convergence}.  The second example concerns the estimation of the normalizing constant in Ising models using mean-field variational inference  \citep{plummer2020dynamics}  where tools from dynamical systems theory are employed to study the convergence properties of the sequential and parallel coordinate ascent algorithms. Focusing on the Ising model with two nodes, their analysis reveals notable differences between the two algorithms in regions where the objective function is non-convex. In particular, the parallel algorithm exhibits periodic oscillatory behavior, which is entirely absent in the sequential variant.

Although the sequential algorithm typically converges more slowly, the two simple examples above indicate that it may offer certain stability advantages in more complex settings where $d$-block coordinate ascent variational inference (CAVI) is commonly employed. This observation motivates a deeper investigation in the more practical context of Bayesian sparse linear regression, where both sequential \citep{ray2022variational,ge2025variational} and parallel  \citep{huang2016variational,10.1214/17-EJS1332,yang2020alpha,dasgupta2023approximate} coordinate ascent algorithms have been implemented in practice. 
Our analysis reveals an interesting phenomenon when both algorithms are reformulated as fixed-point iterations. Specifically, under a Gaussian design, the spectral radius of the Jacobian matrix evaluated at the fixed point associated with the parallel update can exceed one with non-negligible probability in realistic scenarios, whereas the corresponding spectral radius for the sequential update remains strictly between zero and one under mild conditions. From a practical perspective, this implies that the parallel algorithm is prone to divergence under realistic settings, as illustrated by a running example presented in \S \ref{sec:run}.

\section{Coordinate ascent variational inference in sparse Bayesian linear regression}
Consider the linear regression model
\be
Y = X\beta + \epsilon, \quad \epsilon \sim \mbox{N}_n(0, \sigma^2),
\ee
where $\beta \in \mathbb{R}^p$ and $p \leq n$. Throughout the remainder of the paper, we assume that the noise variance $\sigma^2$ is fixed and known. This assumption is adopted solely to streamline the exposition and isolate the algorithmic and inferential properties of the variational procedures. 

We impose a spike-and-slab prior on the regression coefficients,
\[
p(\beta) = \prod_{j=1}^p p(\beta_j), 
\quad 
\beta_j \sim (1-\pi)\delta_0 + \pi N(0,\tau^{-1}),
\]
where $\pi \in (0,1)$ controls the overall sparsity level and $\tau>0$ denotes the slab precision.  We assume $\tau$ to be a constant, but allow $\pi$ to depend on $n, p$. This prior induces exact zeros with positive probability while allowing nonzero coefficients to be adaptively shrunk, making it particularly suitable for high-dimensional sparse regression. It is well known from \cite{CastilloSchmidtHieberVanDerVaart2015,yang2020alpha,ray2022variational} that, for consistent recovery and model selection of the true signals in high-dimensional settings, the prior inclusion probability must satisfy $\pi \asymp p^{-A}$ for some constant $A>0$.

Posterior inference is carried out using a mean-field variational approximation of the form $
q(\beta) = \prod_{j=1}^p q_j(\beta_j)$, 
which factorizes across coordinates and leads to closed-form coordinate ascent updates. We first derive the sequential coordinate ascent variational inference (CAVI) updates and then contrast them with their parallel counterparts.

\subsection{Sequential coordinate ascent}
Define $
a_j = \|X_j\|^2/\sigma^2 + \tau$, 
where $X_j$ denotes the $j$th column of the design matrix $X$. From Proposition~\ref{prop:cavi}, denoting $\mbox{logit}(x) := \log \{x/(1-x)\}$ for $x \in (0, 1)$, the coordinate-wise updates for the variational mean and inclusion probability are given by
\begin{align} \label{eq:seq_CS1}
\mu_j^{(t+1)} &= \frac{1}{\sigma^2a_j} \Big[ \langle X_j, y \rangle  
- \sum_{l < j} X_j^{\T} X_l \alpha_l^{(t)} \mu_l^{(t+1)} 
- \sum_{l > j} X_j^{\T} X_l \alpha_l^{(t)} \mu_l^{(t)} \Big],
\quad j = 1, \ldots, p, \\
\mbox{logit}(\alpha_j^{(t)}) & = \mbox{logit}(\pi) 
+ \frac{1}{2} \log \bigg(\frac{\tau}{a_j}\bigg) 
+ \frac{a_j (\mu_j^{(t)})^2}{2}, 
\quad \alpha_j^{(t)} := \psi_j(\mu_j^{(t)}). \label{eq:seq_CS2}
\end{align}
The update for $\mu_j^{(t+1)}$ exhibits a Gauss--Seidel structure: newly updated coordinates $\mu_l^{(t+1)}$ for $l<j$ are immediately reused, whereas coordinates $l>j$ retain their previous values. This sequential dependence is a defining feature of classical CAVI algorithms and is often associated with improved numerical stability and faster convergence relative to fully parallel updates.

To simplify notation and highlight the underlying linear-algebraic structure, define $L^* = \mbox{lower}(X^{\T} X)$ and $U^* = \mbox{upper}(X^{\T} X)$, so that $(L^*)^{\T} = U^*$. Let
\[
D_{\alpha^{(t)}} = \mbox{diag}(\alpha_1^{(t)}, \alpha_2^{(t)}, \ldots, \alpha_p^{(t)}),
\quad
L = L^*D_{\alpha^{(t)}}, 
\quad 
U = U^*D_{\alpha^{(t)}}.
\]
Further define $D = \sigma^2 \mbox{diag}(a_1, \ldots, a_p)$ and $f = X^{\T}Y$. With this notation, the sequential update \eqref{eq:seq_CS1}--\eqref{eq:seq_CS2} can be written compactly as
\begin{align}
\mu^{(t+1)} & = D^{-1} [f - L \mu^{(t+1)} - U \mu^{(t)} ] \nonumber\\
 & = -(I + D^{-1} L)^{-1} D^{-1} U \mu^{(t)} 
 + (I + D^{-1}L)^{-1} D^{-1} f \nonumber\\
& = -(D+ L^*D_{\alpha^{(t)}})^{-1}  (L^*)^T D_{\alpha^{(t)}} \mu^{(t)} 
 + (D+ L^*D_{\alpha^{(t)}})^{-1} f  \nonumber\\
& := G(\mu^{(t)}) \mu^{(t)} + H(\mu^{(t)}), 
\label{eq:seq_final}
\end{align}
where
\begin{align}
G(\mu^{(t)}) = -(D+ L^*D_{\alpha^{(t)}})^{-1}  (L^*)^T D_{\alpha^{(t)}},
\qquad
H(\mu^{(t)}) = (D+ L^*D_{\alpha^{(t)}})^{-1} f.
\end{align}

Equation~\eqref{eq:seq_final} reveals that the sequential CAVI algorithm can be interpreted as a nonlinear fixed-point iteration, where the non-linear operator $G(\mu^{(t)})$ depends implicitly on the current iterate $\mu^{(t)}$ through the variational inclusion probabilities $\alpha^{(t)}$. This dependence distinguishes the variational dynamics from classical linear Gauss--Seidel iterations and plays a central role in the convergence behavior analyzed in subsequent sections.

\subsection{Parallel coordinate ascent}
We now turn to the parallel CAVI scheme, in which all coordinates are updated simultaneously using information from the previous iteration. From Proposition~\ref{prop:cavi}, the updates take the form
\begin{align} \label{eq:par_CS1}
\mu_j^{(t+1)} &= \frac{1}{\sigma^2a_j} \Big[ \langle X_j, y \rangle  
- \sum_{l \neq j} X_j^{\T} X_l \alpha_l^{(t)} \mu_l^{(t)}  \Big],
\quad j = 1, \ldots, p, \\
\mbox{logit}(\alpha_j^{(t)}) & = \mbox{logit}(\pi) 
+ \frac{1}{2} \log \bigg(\frac{\tau}{a_j}\bigg) 
+ \frac{a_j (\mu_j^{(t)})^2}{2}, 
\quad \alpha_j^{(t)} := \psi_j(\mu_j^{(t)}). \label{eq:par_CS2}
\end{align}
Unlike the sequential scheme, the parallel updates do not exploit intermediate coordinate updates within an iteration. As a result, they correspond to a Jacobi-type iteration, which is often easier to parallelize and is well suited for distributed computing architectures.
Using the same matrix notation as before, the parallel update \eqref{eq:par_CS1}--\eqref{eq:par_CS2} can be written succinctly as
\begin{align}\label{eq:par_final}
\mu^{(t+1)} = D^{-1} [f - L \mu^{(t)} - U \mu^{(t)} ].
\end{align}
Comparing \eqref{eq:seq_final} and \eqref{eq:par_final}, the essential distinction between the two algorithms lies in the treatment of the lower-triangular component $L$. The sequential scheme effectively preconditions the update through $(D + L^*D_{\alpha^{(t)}})^{-1}$, whereas the parallel scheme applies a simpler diagonal preconditioner $D^{-1}$. This difference has important consequences on the convergence of the algorithms.

\section{A running example} \label{sec:run}
We illustrate the qualitative behavior of the two schemes in a simple synthetic setting.\footnote{\url{https://github.com/debdeepuw/Sequential_parallel_CAVI}} Consider the model in (1) with Gaussian design $X \in \mathbb{R}^{n \times p}$ having independent $N(0,1)$ entries. We fix $(n,p,s) = (200,50,25)$, where $s$ denotes the number of nonzero regression coefficients, and generate $\beta^\ast = (1,\ldots,1,0,\ldots,0) \in \mathbb{R}^p$ with $s$ active components and $\sigma^2 = 1$. 
We run the sequential CAVI algorithm defined by \eqref{eq:seq_CS1}--\eqref{eq:seq_CS2} with $\pi = 0.5, \tau = 1$, producing iterates $\mu^{(t)}$. Figure~\ref{fig:elbo_seq} displays the resulting variational mean together with the evidence lower bound (ELBO) evaluated using the exact expression in (24) in \cite{10.1214/12-BA703}.  
\begin{figure}[htbp!]
\centering
\includegraphics[width=0.8\textwidth]{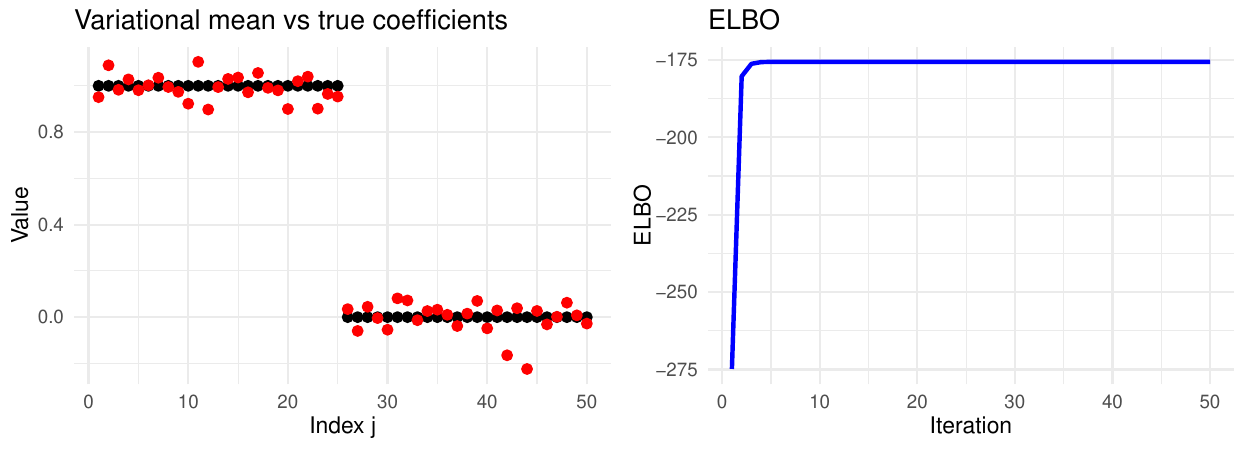}
\caption{Sequential CAVI for $(n,p,s)=(200,50,25)$. Left: variational mean $\mu_j$ (red) versus true coefficients $\beta_j$ (black). Right: ELBO as a function of iteration.}
\label{fig:elbo_seq}
\end{figure}

The variational mean $\mu^{(t)}$ quickly stabilizes and recovers the underlying sparsity pattern, with clear separation between active and inactive coordinates. More importantly, the ELBO increases monotonically and reaches a plateau within a small number of iterations.  In contrast, the parallel CAVI iteration (8) does not exhibit such stability. In this example, the corresponding ELBO sequence is highly unstable and diverges, and is therefore omitted. This discrepancy motivates a closer examination of the local dynamics of the two update schemes.

\section{Local asymptotic stability analysis}

Many iterative algorithms in statistics and optimization, such as sequential and parallel versions of the coordinate ascent variational inference as in \eqref{eq:seq_final} and \eqref{eq:par_final} can be viewed as fixed point iterations whose convergence depends on the chosen update scheme. In this section, we develop a framework for analyzing the local convergence properties of such algorithms, with particular emphasis on the distinction between sequential and parallel updates.
We consider fixed point iterations of the form
\begin{eqnarray}
x_{k+1} = g(x_k,\theta), \label{eq:maps}
\end{eqnarray}
where $x_k \in U \subset \mathbb{R}^n$ and $\theta \in V \subset \mathbb{R}^p$ is a parameter.  
Given an initial condition $x_0$, repeated application of $g$ generates an orbit
\[
\{x_0, g(x_0;\theta), g^2(x_0;\theta), \ldots\},
\]
which describes the evolution of the algorithm.  The sequential and the parallel versions in \eqref{eq:seq_final} and \eqref{eq:par_final} correspond to different choices of the map $g$, and therefore generate different orbits even when they target the same fixed point. Throughout this section we fix $\theta$ and write $g(x)\equiv g(x;\theta)$. Suppose $g$ admits a fixed point $\bar{x}$ satisfying $\bar{x}=g(\bar{x})$, representing a candidate limit of the algorithm. The key question is whether orbits starting sufficiently close to $\bar{x}$ converge to $\bar{x}$, and how this behavior depends on the update scheme.  To analyze local convergence, we study small perturbations around the fixed point. Writing $x_k=\bar{x}+y_k$ and linearizing the iteration yields
\begin{eqnarray}\label{eq:map}
y_{k+1} = Ay_k, \qquad A := Dg(\bar{x}),
\end{eqnarray}
where the Jacobian $A$ depends on the specific form of $g$. In particular, sequential and parallel updates typically lead to different Jacobians, even when they share the same fixed point. The orbits $\{A^k y_0\}_{k\ge 0}$ of this linear system approximate the behavior of the nonlinear iteration near $\bar{x}$ and determine whether perturbations decay or amplify.
The fixed point $\bar{x}$ is called \emph{Lyapunov stable} if all orbits starting sufficiently close remain close for all future iterations, and \emph{asymptotically stable} if, in addition, these orbits converge to $\bar{x}$ as $k\to\infty$.  This notion captures local convergence of the algorithm under the chosen update scheme. The guarantee of local convergence is quantified \citep{wiggins2003introduction} by the spectral radius of $A$, denoted by $\rho(A) = \max\{ | \lambda |: \lambda \, \text{is an eigen value of}\, A\}$. 
\begin{theorem}\label{thm:sr}
If $\rho\{Dg(\bar{x})\} < 1$, then the fixed point $\bar{x}$ of the nonlinear map \eqref{eq:maps} is asymptotically stable.
\end{theorem}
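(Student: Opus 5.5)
The plan is the standard adapted-norm argument. Throughout we assume, as is implicit in the statement, that $g$ is continuously differentiable on a neighbourhood of $\bar{x}$; this holds for both \eqref{eq:seq_final} and \eqref{eq:par_final}, since $\psi_j$ is smooth and $D+L^*D_{\alpha}$ has positive diagonal and is lower triangular, hence invertible. Write $A = Dg(\bar{x})$ and fix $\varepsilon>0$ with $\rho(A)+2\varepsilon<1$.

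\emph{Step 1 (adapted norm).} We construct a vector norm $\|\cdot\|_*$ on $\mathbb{R}^n$ whose induced operator norm satisfies $\|A\|_* \le \rho(A)+\varepsilon$. To do this, take the Jordan (or Schur) form $A = PJP^{-1}$. Rescale by $S_\delta=\mathrm{diag}(1,\delta,\delta^2,\ldots)$ so that the off-diagonal entries of $S_\delta^{-1}JS_\delta$ are at most $\delta$. Then set $\|x\|_* := \|S_\delta^{-1}P^{-1}x\|_\infty$, where we work over $\mathbb{C}^n$ and restrict to real vectors. With this choice, $\|A\|_* \le \rho(A)+\delta$, and we take $\delta=\varepsilon$. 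An alternative route uses Gelfand's formula: pick $m$ with $\|A^m\|<1$ and define $\|x\|_*=\sum_{k<m}\|A^kx\|/r^k$ for suitable $r\in(\rho(A),1)$. Either construction works.

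\emph{Step 2 (local contraction).} By differentiability of $g$ at $\bar{x}$, we have $g(\bar{x}+y)-\bar{x} = Ay + R(y)$ with $\|R(y)\|_* \le \varepsilon\|y\|_*$ whenever $\|y\|_*\le r_0$ for some $r_0>0$; this uses the equivalence of norms in finite dimensions. Combining with Step 1, for $\|y\|_*\le r_0$,
\[
\|g(\bar{x}+y)-\bar{x}\|_* \le (\rho(A)+2\varepsilon)\|y\|_* =: \kappa\|y\|_*, \qquad \kappa<1 .
\]

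\emph{Step 3 (conclusion by induction).} Start with $x_0$ satisfying $\|x_0-\bar{x}\|_*\le r_0$. Then every iterate stays in that ball, and $\|x_k-\bar{x}\|_*\le \kappa^k\|x_0-\bar{x}\|_*$. Given any $\eta>0$, choose the initial radius to be $\min(r_0,\eta)$, translated back to the Euclidean norm via norm equivalence. This yields Lyapunov stability, and the geometric decay gives convergence to $\bar{x}$, i.e.\ asymptotic stability.

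The main obstacle is Step 1. The Euclidean operator norm of a non-normal $A$ can exceed one even when $\rho(A)<1$, and this is precisely the situation for the Gauss--Seidel-type Jacobian $G$, which is non-symmetric. Building the adapted norm carefully, in particular handling complex eigenvalues while keeping a real norm, is therefore the key technical point; the rest is routine.
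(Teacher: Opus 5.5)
The paper gives no proof of this theorem; it is quoted as a standard fact from Wiggins (2003). So there is no competing argument in the text, and your adapted-norm proof supplies a correct, self-contained proof of that textbook result. The pieces work as you say:
\begin{itemize}
\item The Jordan rescaling gives $\|A\|_*\le\rho(A)+\varepsilon$ in your $\infty$-type norm.
\item Differentiability of $g$ at $\bar x$ gives the $\varepsilon\|y\|_*$ remainder bound. Only Fr\'echet differentiability at $\bar x$, with $\bar x$ interior to the domain, is needed; $C^1$ on a neighbourhood is more than required.
\item The induction gives both Lyapunov stability and convergence.
\end{itemize}

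Compared with the bare citation, your route also yields the explicit geometric bound $\|x_k-\bar x\|_*\le\kappa^k\|x_0-\bar x\|_*$ with $\kappa<1$. The paper needs this. After Theorem~\ref{thm:seq_cont} it asserts that combining that result with the present theorem gives \emph{local linear convergence}. That does not follow from the theorem's conclusion as stated, since asymptotic stability alone allows sublinear convergence. It does follow from your argument.

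Two small corrections. First, the real-versus-complex issue you call the main obstacle is not really one. Restricting a norm on $\mathbb{C}^n$ to $\mathbb{R}^n$ gives a norm, and for the real matrix $A$ its induced operator norm is no larger than the complex one, which is exactly how you already use it. Second, in the Gelfand variant the condition that gives $\|Ax\|_*\le r\|x\|_*$ is $\|A^m\|\le r^m$, not merely $\|A^m\|<1$. So your ``suitable $r$'' must be taken in $[\|A^m\|^{1/m},1)$. Similarly, the Schur alternative needs $\delta$ small enough that the scaled off-diagonal row sums are at most $\varepsilon$, rather than entries bounded by $\delta$.
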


Theorem~\ref{thm:sr} provides a standard sufficient condition for local asymptotic stability of a fixed point in terms of the spectral radius of the Jacobian of the update map. In the context of variational algorithms, this result allows us to characterize the local convergence behavior of coordinate ascent schemes by analyzing the linearization of the corresponding update operators at a stationary point.

In the following, we compute the Jacobian matrices associated with the sequential and parallel CAVI updates and study their spectral properties at a fixed point $\mu^*$.  For simplicity, assume $\sigma^2 =1$. The following calculations form the basis for a precise comparison between the two schemes.

We begin with the sequential update \eqref{eq:seq_final}. The Jacobian of the nonlinear map at the fixed point $\mu^*$ is given by
\begin{align}\label{eq:seq_J}
J_{\mathrm{seq}}(\mu^*) = G(\mu^*) 
+ \Big[ \frac{\partial G(\mu^*)}{\partial \mu_1} \mu^{*}; \cdots; \frac{\partial G(\mu^{*})}{\partial \mu_1}  \mu^{*}\Big] 
+ \Big[ \frac{\partial H(\mu^{*})}{\partial \mu_1}; \cdots; \frac{\partial H(\mu^{*})}{\partial \mu_p}\Big], 
\end{align}
where $\alpha_j^* = \psi_j(\mu_j^*)$. The first term corresponds to the linear part of the iteration, while the remaining terms arise from the implicit dependence of the operators $G(\cdot)$ and $H(\cdot)$ on the variational inclusion probabilities.

To evaluate these derivatives, we make repeated use of the identity that for a matrix-valued function $A(x) \in \mathbb{R}^{n \times n}$,
$\partial A(x)^{-1} / \partial x = - A(x)^{-1} [\partial A(x)/ \partial x] A(x)^{-1}.
$
Applying this identity yields
\begin{align*}
\frac{\partial G(\mu^{*})}{\partial \mu_j}  
=   (D+ L^*D_{\alpha^{*}})^{-1} L^* \frac{\partial D_{\alpha^{*}}}{\partial \mu_j} (D+ L^*D_{\alpha^*})^{-1}(L^*)^{\T} D_{\alpha^{*}} 
- (D+ L^*D_{\alpha^{*}})^{-1} (L^*)^{\T}  \frac{\partial D_{\alpha^*}}{\partial \mu_j}, 
\end{align*}
and
\begin{align*}
\frac{\partial H(\mu^{*})}{\partial \mu_j} 
=   -(D+ L^*D_{\alpha^{*}})^{-1} L^* \frac{\partial D_{\alpha^{*}}}{\partial \mu_j} (D+ L^*D_{\alpha^{*}})^{-1} f.
\end{align*}
These expressions make explicit how the nonlinearity induced by the spike-and-slab variational parameters propagates into the Jacobian through the diagonal matrix $D_{\alpha^*}$.

We next turn to the parallel update \eqref{eq:par_final}. In this case, the Jacobian at the fixed point $\mu^*$ admits a simpler closed form:
\begin{align}\label{eq:par_J}
J_{\mathrm{par}}(\mu^{*}) 
= - D^{-1}(L + L^{\T}) D_{\alpha^*} 
- D^{-1}(L + L^{\T}) \mbox{diag} (\dot{\alpha}_j^{*} \mu_j^{*}),
\end{align}
where $\dot{\alpha}_j^{*}$ denotes the derivative of $\alpha_j$ evaluated at $\mu_j^*$. 

Our goal is to analyze the two spectra, given by  $\rho\{J_{\mathrm{seq}}(\mu^*)\}$ and $\rho\{J_{\mathrm{par}}(\mu^{*})\}$ and thereby contrast the local stability properties of the two algorithms. To facilitate this analysis, we introduce the following notation.  Let $L_1^* = D^{-1/2} L^* D^{-1/2}$ and define
$
A := L_1^*  + (L_1^*)^{\T}, 
$  $B = \mbox{diag}\{(\mu_j^*)^2 a_j(1- d_j^*)\}$, $
C = D_{\alpha^*}^{1/2}AB.$
Observe that the matrix
$
M := A + I
$
is positive definite by construction.
To characterize the local stability of the sequential CAVI updates, it is necessary to control the interaction between the curvature induced by the variational inclusion probabilities and the correlation structure of the design matrix. In particular, the nonlinear dependence of $D_{\alpha^*}$ on the fixed point $\mu^*$ introduces higher-order terms into the Jacobian that cannot be neglected a priori. 

The following Assumption \ref{ass:contraction} formalizes a regime in which these nonlinear effects remain sufficiently mild. The condition can be viewed as a localized contraction requirement that bounds quadratic forms involving the matrix $B$ when the signals are either sufficiently large or sufficiently small, which capturing the sensitivity of the variational inclusion probabilities relative to the positive definite matrix $M = A + I$.  We explain the plausibility of Assumption \ref{ass:contraction} in Remark \ref{rem:ass_pl}. 

\begin{assumption}\label{ass:contraction}
Let $K_{L_1^*}  =  y^{H} D_{\alpha^*} (L_1^*)^{\T} L_1 D_{\alpha^*} y$. 
For all  y with $y^Hy = 1$, there exists $0 < \delta <  \min \{1/2, \lambda_{\mathrm{min}} [M + D_{\alpha^*}^{-1}]/ 
\| (L_1^*)^{\T} L_1^* \|_2 \}$  such that 
\begin{eqnarray*}
y^{H} B M^2 B y \leq \delta y^{H} M  y, \quad   
y^{H} B^2y \leq \delta y^{H} (D_{\alpha^*}^{-1} - I)  y
\end{eqnarray*}
\end{assumption}
We are now in a position to state the main stability result for the sequential CAVI scheme.
\begin{theorem}\label{thm:seq_cont}
If Assumption \ref{ass:contraction} is satisfied, then $\rho\{J_{\mathrm{seq}}(\mu^*)\} < 1$.  
\end{theorem}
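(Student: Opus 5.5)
We would start by deriving $J_{\mathrm{seq}}(\mu^*)$ directly from the implicit form of the update rather than from the expression \eqref{eq:seq_J}. With $\sigma^2=1$, the sequential step says that $\mu'=\mu^{(t+1)}$ and $\mu=\mu^{(t)}$ satisfy
\[
(D+L^*D_{\alpha(\mu)})\mu'=f-(L^*)^{\T}D_{\alpha(\mu)}\mu .
\]
Differentiating at $\mu'=\mu=\mu^*$ and using $d D_{\alpha}\,\mu^*=\mathrm{diag}(\dot\alpha_j^*\mu_j^*)\,d\mu$ gives
\[
J_{\mathrm{seq}}(\mu^*)=-(D+L^*D_{\alpha^*})^{-1}\big[(L^*+(L^*)^{\T})\,\mathrm{diag}(\dot\alpha_j^*\mu_j^*)+(L^*)^{\T}D_{\alpha^*}\big].
\]
From \eqref{eq:seq_CS2}, $\dot\alpha_j=\alpha_j(1-\alpha_j)a_j\mu_j$. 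Hence $\mathrm{diag}(\dot\alpha_j^*\mu_j^*)=D_{\alpha^*}B$, where we read $d_j^*=\alpha_j^*$. Conjugating by $D^{1/2}$, which does not change the spectrum, $J_{\mathrm{seq}}$ becomes similar to
\[
-(I+L_1^*D_{\alpha^*})^{-1}\big[A D_{\alpha^*}B+(L_1^*)^{\T}D_{\alpha^*}\big].
\]
This reduction is routine but fixes all notation.

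Next we would follow the classical proof that Gauss--Seidel converges for positive definite matrices. Take an eigenpair $(\lambda,w)$, write the eigen-equation as
\[
-\big(A D_{\alpha^*}B+(L_1^*)^{\T}D_{\alpha^*}\big)w=\lambda(I+L_1^*D_{\alpha^*})w,
\]
and premultiply by $w^H D_{\alpha^*}$. Set $z=D_{\alpha^*}w$ (normalized so that the $y$ of Assumption~\ref{ass:contraction} is the appropriate rescaling), and define
\[
a=z^HL_1^*z,\qquad s=z^HD_{\alpha^*}^{-1}z>0,\qquad e=z^HAB\,w .
\]
Then $\lambda=-(\bar a+e)/(s+a)$.

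If $B=0$, then $|\lambda|<1$ is equivalent to $s^2+2s\,\mathrm{Re}\,a>0$. This holds because
\[
s+2\,\mathrm{Re}\,a=z^H(A+D_{\alpha^*}^{-1})z=z^H(M+D_{\alpha^*}^{-1}-I)z>0,
\]
using $M\succ0$ and $D_{\alpha^*}^{-1}\succeq I$. That is the linear Gauss--Seidel part.

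The nonlinear term $e$ is handled by writing $AB=MB-B$, so that $e=z^HMBw-z^HBw$. For the first piece, Cauchy--Schwarz gives $|z^HMBw|\le\|z\|\,\|MBw\|$, and the first inequality of Assumption~\ref{ass:contraction} gives $\|MBw\|^2\le\delta\, w^HMw$. For the second piece, the second inequality gives $|z^HBw|\le\|z\|\sqrt{\delta\, w^H(D_{\alpha^*}^{-1}-I)w}$. The goal is to show
\[
|\bar a+e|^2<|s+a|^2,\quad\text{i.e.}\quad |e|^2+2\,\mathrm{Re}(a\bar e)<s^2+2s\,\mathrm{Re}\,a .
\]
The right side is bounded below by $s\,z^H(M+D_{\alpha^*}^{-1}-I)z$. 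On the left, $|e|^2$ is $O(\delta)$ times quadratic forms in $M$ and $D_{\alpha^*}^{-1}-I$, and the cross term is bounded using $|a|^2\le \|z\|^2\, K_{L_1^*}$-type quantities, i.e.\ by $\|(L_1^*)^{\T}L_1^*\|_2$.

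The main obstacle is this last inequality: making the cross term $2\,\mathrm{Re}(a\bar e)$ and $|e|^2$ absorbable simultaneously. We would use $\delta<1/2$ to absorb $|e|^2$ into half of $s\,z^H(M+D_{\alpha^*}^{-1}-I)z$. We would use $\delta<\lambda_{\min}[M+D_{\alpha^*}^{-1}]/\|(L_1^*)^{\T}L_1^*\|_2$ to make the AM--GM bound on $|a|\,|e|$ dominated by the remaining half. The delicate point is that $w$ and $z=D_{\alpha^*}w$ differ, so the quadratic forms in the assumption must be transferred between them using $0<\alpha_j^*<1$. Once the strict inequality holds, $|\lambda|<1$ for every eigenvalue, and hence $\rho\{J_{\mathrm{seq}}(\mu^*)\}<1$. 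By Theorem~\ref{thm:sr}, this also gives local asymptotic stability of $\mu^*$.
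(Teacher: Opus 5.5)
You follow the paper's route: implicit differentiation, $\mathrm{diag}(\dot\alpha_j^*\mu_j^*)=D_{\alpha^*}B$, conjugation by $D^{1/2}$, premultiplying the eigen-equation by $w^{\H}D_{\alpha^*}$, and reducing $|\lambda|<1$ to $|e|^2+2|a||e|<s(s+2\,\mathrm{Re}\,a)=s\,z^{\H}(A+D_{\alpha^*}^{-1})z=:R$.

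There is one slip. Since $AD_{\alpha^*}Bw=ABD_{\alpha^*}w=ABz$, the nonlinear term is $e=z^{\H}ABz$, not $z^{\H}ABw$. With this correction every form involves $z$ only, and the homogeneous assumption applies to $z$ directly, so your ``delicate point'' about $w$ versus $z$ disappears.

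The genuine gap is the final absorption step. Your splitting $AB=MB-B$ gives $|e|\le\sqrt{\delta}\,\|z\|\{(z^{\H}Mz)^{1/2}+(z^{\H}(D_{\alpha^*}^{-1}-I)z)^{1/2}\}$. Hence $|e|^2\le2\delta\|z\|^2z^{\H}(A+D_{\alpha^*}^{-1})z\le2\delta R$ (as $\|z\|^2\le s$), so $\delta<1/2$ leaves nothing over for the cross term. Also $|a|^2\le\|(L_1^*)^{\T}L_1^*\|_2\|z\|^4\le\kappa R$ with $\kappa=\|(L_1^*)^{\T}L_1^*\|_2/\lambda_{\min}(A+D_{\alpha^*}^{-1})$, so the cross term is only bounded by $2\sqrt{2\delta\kappa}\,R$. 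Closing the argument requires $\delta\kappa$ below a small absolute constant. Assumption~\ref{ass:contraction} only bounds $\delta$ by $\lambda_{\min}(M+D_{\alpha^*}^{-1})/\|(L_1^*)^{\T}L_1^*\|_2$, and $M+D_{\alpha^*}^{-1}=A+D_{\alpha^*}^{-1}+I$, so not even $\delta\kappa<1$ follows.

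No rearrangement of constants fixes this, because the statement is false as written. For $p=2$, put $r=X_1^{\T}X_2/\sqrt{a_1a_2}$, $d_1^*=d_2^*=d$ and $b_1=b_2=b$. Your conjugated Jacobian is then
\[
\begin{pmatrix}0 & -rd(1+b)\\ -rdb & r^2d^2(1+b)\end{pmatrix},
\]
whose characteristic polynomial takes the value $1-r^2d^2(1+b)^2$ at $\lambda=1$. Take $r=0.95$, $d=0.9$, $\delta=0.49$ and $b^2=\delta(1/d-1)$. Every condition of the assumption holds:
\begin{itemize}
\item the second inequality holds with equality;
\item the first reduces to $b^2\lambda_{\max}(M)\approx0.11\le\delta$;
\item $\lambda_{\min}(M+D_{\alpha^*}^{-1})/r^2\approx1.29>1/2$.
\end{itemize}
Yet $|r|d(1+b)\approx1.05$, so $\rho\{J_{\mathrm{seq}}(\mu^*)\}\approx1.09$. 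These values occur at a genuine fixed point, e.g.\ $\tau=1$, $\|X_j\|^2=99$, $X_1^{\T}X_2=95$, $\pi\approx0.97$, with $Y$ chosen to make $\mu^*$ fixed.

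The paper's proof fails at the same place. Its last claim, $\delta+2\{\delta K_{L_1^*}/(2a+s^*)\}^{1/2}<1$, does not follow from the assumption. Lemma~\ref{lem:aux3} moreover uses $B(MD_{\alpha^*}+D_{\alpha^*}M)B\succeq0$, which is false in general.

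Your triangle-inequality bound on $e$ does give a correct result under a stronger hypothesis. Keep the two quadratic-form inequalities, but require $\delta<1/4$ and $32\,\delta\,\|(L_1^*)^{\T}L_1^*\|_2\le\lambda_{\min}(A+D_{\alpha^*}^{-1})$. Then $|e|^2+2|a||e|\le(2\delta+2\sqrt{2\delta\kappa})R<R$, and Lemma~\ref{lem:aux3} is not needed.
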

Theorem~\ref{thm:seq_cont} establishes that, under mild regularity conditions, the sequential CAVI algorithm is locally asymptotically stable at the fixed point $\mu^*$. Combined with Theorem~\ref{thm:sr}, this result implies local linear convergence of the iterates.
\begin{remark}[Plausibility of Assumption \ref{ass:contraction}]\label{rem:ass_pl}
Fix $\delta\in(0,1)$ and suppose that $X\in\mathbb R^{n\times p}$  has i.i.d.\ $N(0,1)$ entries with $p \leq n$, so that $a_j=\|X_j\|^2+\tau=O(n)$ uniformly with high probability and $\|M\|_2\le 1+C\sqrt{p/n}$, $\lambda_{\min}(M)\ge 1/2$.  Let $p \leq n$. Recall that $B=\mathrm{diag}(b_j)$ with $b_j=(\mu_j^*)^2 a_j(1-d_j^*)$. Also, observe that $(1-d_j^*)/d_j^*= \mbox{logit}(\pi)^{-1}\sqrt{a_j}e^{-a_j(\mu_j^*)^2/2}$ and $b_j = \mbox{logit}(\pi) a_j (\mu_j^*)^2/ (1+ \mbox{logit}(\pi) e^{a_j (\mu_j^*)^2/2}/\sqrt{a_j})$.  Then, with $\pi = 1/p^A$,  if $|\mu_j^*|\le \kappa_1 n^{-1/2}$, then $b_j\le C_1\kappa_1^2$ and  $(1-d_j^*)/d_j^* \geq \sqrt{a_j}e^{-\kappa_1^2/2}$.  If $|\mu_j^*| >  \kappa_2 n^{-1/2}\log^{1/2} n$ for sufficiently large $\kappa_2$ depending on $\delta$, $b_j^2 \leq \delta (1-d_j^*)/d_j^*$ for $\delta \in (0, 1)$.  Hence by choosing $\kappa_1$ sufficiently small, depending on $\delta$, we have 
\begin{align*}
y^H B M^2 B y &\le (1+C\sqrt{p/n})^2 C_1^2 \kappa_1^4 \|y\|^2 \le \delta\, y^H M y, \\
y^H B^2 y & \le y^H (D_{\alpha^*}^{-1}-I)y. 
\end{align*}

Hence Assumption \ref{ass:contraction} holds with a high probability probability in both small and large signal regimes. The intermediate regime 
$
|\mu_j^*| \in (\kappa_1 n^{-1/2}, \kappa_2 n^{-1/2}\log^{1/2} n)
$
is more delicate, as neither the small nor large-signal approximations apply directly. We do not pursue a detailed analysis of this regime here.
\end{remark}

In contrast, as shown in Theorem \ref{thm:par_cont}, the parallel CAVI scheme may fail to satisfy a contraction property even at a stationary point. Indeed, the spectral radius of the Jacobian at the fixed point exceeds one, which explains the divergence of the iteration observed in \S \ref{ssec:ex}.
\begin{theorem}
\label{thm:par_cont}
Let $X = (X_1,\dots,X_p) \in \mathbb{R}^{n \times p}$ have i.i.d.\ $\mathcal N(0,1)$ entries.  Then if all the elements of $D_{\alpha^*}$ are  greater than $1- \varepsilon$ for some $\varepsilon \in (0, 1)$,  there exist constants $c,c_1,c_2>0$ such that, for all sufficiently large
$n,p$,
\[
\mathbb P\!\left( \rho\{J_{\mathrm{par}}(\mu^*)\} > 2(1- \varepsilon)\sqrt{p/n} \right)
\;\ge\;
1 - 2e^{-c_1 n} - 2e^{-c_2 p}.
\]
\end{theorem}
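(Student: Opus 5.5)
Our plan is to reduce the spectral radius of $J_{\mathrm{par}}(\mu^*)$ to an extreme eigenvalue of a symmetric matrix built from the off-diagonal part of the normalized Gram matrix $X^{\T}X$, and then use standard Gaussian random matrix concentration. With $\sigma^2=1$, the Jacobian in \eqref{eq:par_J} has the form $J_{\mathrm{par}}(\mu^*) = -D^{-1}(L^*+U^*)\Lambda$. Here $\Lambda := D_{\alpha^*}+\mbox{diag}(\dot\alpha_j^*\mu_j^*)$. Since $\dot\alpha_j^* = a_j\mu_j^*\alpha_j^*(1-\alpha_j^*)$, we have $\dot\alpha_j^*\mu_j^*\ge 0$, so $\Lambda$ is diagonal with entries $\ge \alpha_j^*>1-\varepsilon$.

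\textbf{Symmetrization.} By similarity, $J_{\mathrm{par}}$ has the same spectrum as
\[
-\Lambda^{1/2}D^{-1/2}(L^*+U^*)D^{-1/2}\Lambda^{1/2} = -\Lambda^{1/2}A\Lambda^{1/2},
\]
where $A = L_1^*+(L_1^*)^{\T}$ is symmetric with zero diagonal. Hence $\rho(J_{\mathrm{par}})=\|\Lambda^{1/2}A\Lambda^{1/2}\|_2$. Since $\operatorname{tr}A=0$, the matrix $A$ has both positive and negative eigenvalues. Take the unit top eigenvector $v$ of $A$ (or of $-A$, whichever has the larger extreme eigenvalue) and set $w=\Lambda^{-1/2}v/\|\Lambda^{-1/2}v\|$. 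Then $|w^{\T}\Lambda^{1/2}A\Lambda^{1/2}w| = |\lambda(A)|/\|\Lambda^{-1/2}v\|^2 \ge |\lambda(A)|\min_j\Lambda_{jj}$, which gives
\[
\rho(J_{\mathrm{par}})\ge (1-\varepsilon)\,\max\{\lambda_{\max}(A),-\lambda_{\min}(A)\}.
\]
An alternative route, if this vector choice is awkward, is the Rayleigh quotient inequality $\lambda_{\max}(\Lambda^{1/2}A\Lambda^{1/2})\ge \min_j \Lambda_{jj}\,\lambda_{\max}(A)$, which holds when $\lambda_{\max}(A)>0$ by Sylvester-type (Ostrowski) bounds. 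By Ostrowski's theorem the eigenvalues of $\Lambda^{1/2}A\Lambda^{1/2}$ are $\theta_k\lambda_k(A)$ with $\theta_k\in[\lambda_{\min}(\Lambda),\lambda_{\max}(\Lambda)]$, so this step is clean.

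\textbf{Random matrix step.} Write $A = D^{-1/2}X^{\T}XD^{-1/2} - \mbox{diag}(\|X_j\|^2/a_j)$. By Laurent--Massart concentration and a union bound over $j$, we have $a_j = n(1+O(\sqrt{\log p/n}))$ uniformly, with failure probability at most $2e^{-c_1 n}$ for large $n$ (using $p\le n$). Thus $A$ is, up to a $(1+o(1))$ diagonal rescaling, equal to $n^{-1}X^{\T}X - I$ plus a diagonal of size $o(1)$. By the Davidson--Szarek/Gordon bounds, $s_{\max}(X)\ge \sqrt n+\sqrt p - t$ with probability at least $1-e^{-t^2/2}$. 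Taking $t=c\sqrt p$ gives
\[
\lambda_{\max}(n^{-1}X^{\T}X)-1\ge 2\sqrt{p/n}+p/n - O(c\sqrt{p/n})
\]
with failure probability at most $2e^{-c_2p}$. The term $+p/n$ provides room to absorb the $o(\sqrt{p/n})$ errors from the diagonal corrections and the rescaling.

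\textbf{Main obstacle.} The delicate part is obtaining strict exceedance of $2(1-\varepsilon)\sqrt{p/n}$ rather than $2(1-\varepsilon)\sqrt{p/n}(1-o(1))$. The errors from replacing $a_j$ by $n$ and from removing the diagonal are of order $\sqrt{\log p/n}$, which is not negligible relative to $\sqrt{p/n}$ if $p$ is small. We would handle this in two ways. First, exploit the slack $p/n$ together with the quadratic gain in the Gordon bound. Second, note that the rescaling error is multiplicative: $A = \tilde D^{-1/2}(n^{-1}X^{\T}X)\tilde D^{-1/2} - \tilde D^{-1}\,\mbox{diag}(n^{-1}\|X_j\|^2)$ with $\tilde D\approx I$. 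Testing against the top singular vector of $X$, whose mass is delocalized, reduces the diagonal's contribution to an average $\frac1p\sum_j \|X_j\|^2/a_j$, which concentrates at the rate $e^{-cnp}$. If full sharpness fails, we would state the bound with $\varepsilon$ slightly enlarged, absorbing the $o(1)$ into $\varepsilon$ for large $n,p$.
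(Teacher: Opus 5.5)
Your reduction is the same as the paper's. With $\Lambda=D_{\alpha^*}+\mathrm{diag}(\dot\alpha_j^*\mu_j^*)=(I+B)D_{\alpha^*}$ (the paper's $\widetilde D$), $J_{\mathrm{par}}(\mu^*)$ is similar to $-\Lambda^{1/2}A\Lambda^{1/2}$, so $\rho\{J_{\mathrm{par}}(\mu^*)\}\ge\min_j\Lambda_{jj}\,\|A\|_2$; that part is fine.

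\textbf{The gap} is the step you yourself flag. You only use $\min_j\Lambda_{jj}>1-\varepsilon$, with no quantified margin, so you need $\|A\|_2\ge 2\sqrt{p/n}$ with probability $1-2e^{-c_1n}-2e^{-c_2p}$, and your argument does not deliver this. Choosing $t=c\sqrt p$ costs about $2c\sqrt{p/n}$ in $\lambda_{\max}(n^{-1}X^{\T}X)-1$. The $p/n$ term absorbs that only if $\sqrt{p/n}\gtrsim c$, i.e.\ only when $p\ge\gamma_0 n$. For $p=o(n)$ the sharp target is actually false. The edge eigenvalues of $A$, like those of $n^{-1}X^{\T}X-I$, fluctuate on the Tracy--Widom scale $n^{-1/2}p^{-1/6}$ around $\pm2\sqrt{p/n}+p/n$. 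Once $p\ll n^{3/7}$ this scale dominates $p/n$, so $\|A\|_2<2\sqrt{p/n}$ with probability bounded away from zero, and no refinement of the error terms can rescue it.

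There are two further inaccuracies. First, Gordon/Davidson--Szarek give an \emph{upper} bound on $\mathbb E\,s_{\max}(X)$, so the lower tail you quote needs a separate lower bound on the location of $s_{\max}$. Second, uniform $O(\sqrt{\log p/n})$ control of all $a_j$ holds only with polynomially small failure probability, not $2e^{-c_1n}$. Enlarging $\varepsilon$ proves a weaker theorem. The paper's proof has the same hole: it reads $\|I-M\|_2\ge2\sqrt{p/n}$ off Lemma~\ref{thm:Wigner_lower_tau}, which only supplies an unspecified constant $c$ in place of $2$.

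\textbf{The missing idea} is that $\Lambda$ is much better than $\min_j\alpha_j^*$. Since $\dot\alpha_j^*\mu_j^*=\alpha_j^*(1-\alpha_j^*)a_j(\mu_j^*)^2$, you have $\Lambda_{jj}=1+(1-\alpha_j^*)\{\alpha_j^*a_j(\mu_j^*)^2-1\}$. By \eqref{eq:par_CS2}, $a_j(\mu_j^*)^2=2\,\mathrm{logit}(\alpha_j^*)-2\,\mathrm{logit}(\pi)+\log(a_j/\tau)$. Assume $\alpha_j^*>1-\varepsilon$, $\pi\le1/2$ (as in the paper's regime $\pi\asymp p^{-A}$), and $\min_ja_j\ge n/2$, which fails with probability at most $pe^{-cn}$. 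Then $\alpha_j^*a_j(\mu_j^*)^2\ge1$ for $n\ge n_0(\varepsilon,\tau)$, so $\Lambda\succeq I$ and $\rho\{J_{\mathrm{par}}(\mu^*)\}\ge\|A\|_2$.

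It now suffices to show $\|A\|_2>2(1-\varepsilon)\sqrt{p/n}$. This target has a multiplicative margin $\varepsilon$, and your own ingredients handle it:
\begin{enumerate}
\item Test $A$ against $w=D^{1/2}v/\|D^{1/2}v\|$, where $v$ is the top right singular vector of $X$. This gives $w^{\T}Aw\ge s_{\max}(X)^2/\max_ja_j-1$.
\item On the event $\max_ja_j\le n(1+\eta\sqrt{p/n})$, whose failure probability is at most $pe^{-c\eta^2p}$ by Laurent--Massart, intersected with the event $s_{\max}(X)\ge\sqrt n+(1-\eta)\sqrt p$, the right side is at least $(2-5\eta)\sqrt{p/n}$.
\item The second event follows from Gaussian concentration around a median that is $\sqrt n+(1-o(1))\sqrt p$, by the edge of the limiting spectrum.
\item Taking $\eta=\varepsilon/5$ gives the claim, with failure probability of the stated form.
\end{enumerate}
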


\begin{remark}\label{rem:neg}
If all the true coefficients are large, then there exists $\varepsilon \in (0, 1)$ such that all the elements of $D_{\alpha^*}$ are greater than $1- \varepsilon$. So the assumptions of Theorem \ref{thm:par_cont}  are satisfied. Then if $p > n/\{4 (1- \varepsilon)^2\}$, $\rho\{J_{\mathrm{par}}(\mu^*)\}  > 1$ with high probability. 
\end{remark}

\section{Numerical study of local stability}\label{ssec:ex}
We now empirically investigate the local stability properties predicted by the Jacobian analysis in Section 4. Recall that convergence of the fixed-point iteration $\mu^{(t+1)} = g(\mu^{(t)})$ is governed by the spectral radius of the Jacobian $J = Dg(\mu^\ast)$ at a fixed point $\mu^\ast$; in particular, $\rho(J) < 1$ implies local contraction.
To assess this behavior, we compute the spectral radii of the Jacobians $J_{\mathrm{seq}}(\mu^\ast)$ and $J_{\mathrm{par}}(\mu^\ast)$ given in \eqref{eq:seq_J} and \eqref{eq:seq_J} across multiple simulated datasets. Figure~\ref{fig:spectral_radius} summarizes the distribution of $\log \rho(J)$ over $50$ replications.
\begin{figure}[t]
\centering
\includegraphics[width=0.9\textwidth]{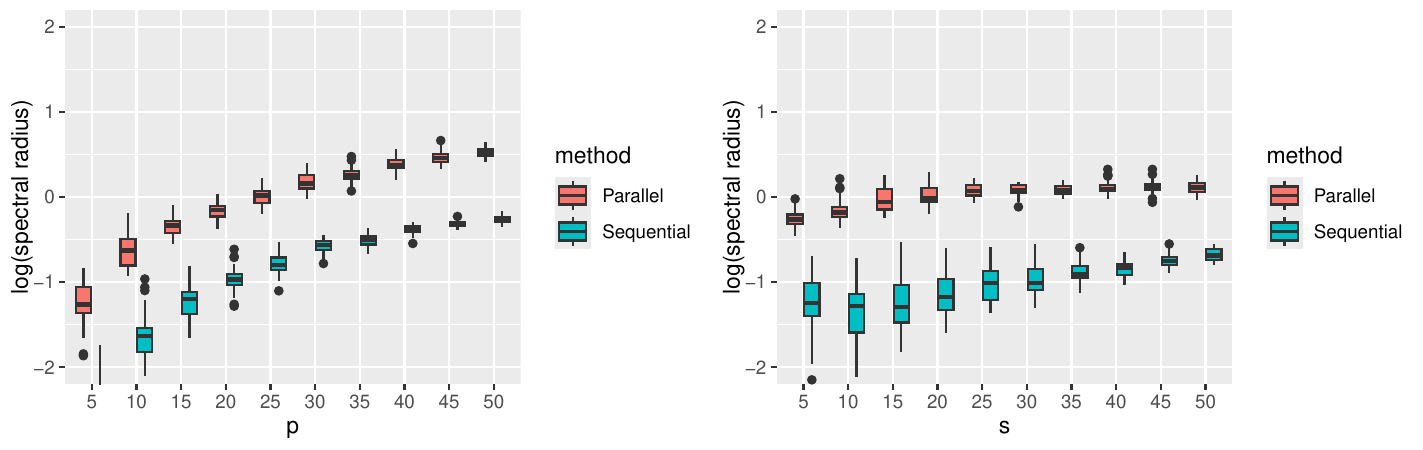}
\caption{Distribution of $\log \rho(J)$ for sequential and parallel CAVI. Left: varying $p$ with $s=p$ and fixed $n =100$. Right: varying $s$ with fixed $(n=200, p=50)$.}
\label{fig:spectral_radius}
\end{figure}
In the left panel, we vary the dimension $p \le n$ with $s=p$ and fixed $n$, while in the right panel we vary the sparsity level $s$ for fixed $(n,p)$. In both settings, a sharp contrast emerges between the two update schemes.
For sequential CAVI, the spectral radius remains uniformly below one, i.e., $\log \rho(J_{\mathrm{seq}}(\mu^\ast)) < 0$, indicating that the fixed point is locally contractive. This is consistent with the stable and monotone ELBO behavior observed in the running example.
In contrast, the parallel scheme frequently yields $\rho(J_{\mathrm{par}}(\mu^\ast)) > 1$, with $\log \rho$ increasing as either $p$ or $s$ grows.   In particular, when $s= p$, and $p > n/4$, we observe $\rho(J_{\mathrm{par}}(\mu^\ast)) > 1$ with a high probability, provide strong empirical support for the theoretical findings in  Remark \ref{rem:neg}. 

In light of Section~4.2 of \cite{ray2022variational}, where the sequential CAVI scheme may fail under random initialization, one might perceive a contradiction with our results. This is not the case. Our analysis is local.  For the sequential updates, we show $\rho(J_{\mathrm{seq}}(\mu^*))<1$, implying local stability only for initializations in a neighborhood of $\mu^*$. In contrast, for the parallel updates, $\rho(J_{\mathrm{par}}(\mu^*))>1$ under certain regimes, yielding divergence irrespective of initialization. Thus, while our negative result for the parallel scheme is universal, the positive result for the sequential scheme is local and does not preclude failure under poor initialization. 

\section{Discussion}
The theoretical guarantees established in this paper rely on assumptions that are most naturally satisfied under regimes where $p \le n$, ensuring that the design matrix exhibits favorable spectral properties. Additionally, our results do not explicitly characterize how sparsity in the underlying signal influences the stability properties of the algorithms.
As highlighted in Remark \ref{rem:neg}, the negative result for the parallel scheme is most clearly justified under a dense regime in which all signals are active, leading to variational inclusion probabilities close to one and, consequently, instability of the associated fixed point. This regime represents a worst-case scenario and may not fully reflect practical settings where sparsity is present. Empirically, we observe that when the number of true signals is small relative to $p$, the parallel algorithm can exhibit improved stability behavior for moderate values of $p$. This suggests that sparsity may play a regularizing role in the dynamics of the algorithm, potentially mitigating the divergence phenomena predicted by the theory in dense settings.
A precise theoretical understanding of how sparsity interacts with the local stability properties of sequential and parallel CAVI remains an important open problem. 

\section{Acknowledgements}
I thank Anirban Bhattacharya, Yun Yang, Natesh Pillai and Botond Szabo for helpful discussions on this topic. I also note that generative AI tools were used for producing certain plots and visualizations; however, all theoretical results and proofs are entirely the author's own.

\appendix

\section{Proof of Theorem \ref{thm:seq_cont}}
Observe that the equation  $J_{\mathrm{seq}}(\mu^*)x = \lambda x$ can be written as
  \begin{align*}
 \lambda (D+ L^*D_{\alpha^{*}})x = -(L^*)^{\T}  D_{\alpha^{*}} x  - L^* \mbox{diag}(x_j \dot{\alpha}_j^{*}) \mu^{*} - 
 (L^*)^{\T}  \mbox{diag}(x_j \dot{\alpha}_j^{*}) \mu^{*}. 
 \end{align*}
Recall that  $L_1^* = D^{-1/2} L^* D^{-1/2}$. Hence 
   \begin{align*}
 \lambda D^{1/2} (I+ L_1^*D_{\alpha^{*}}) D^{1/2}x = -(L^*)^{\T}  D_{\alpha^{*}} x  - L^* \mbox{diag}(x_j \dot{\alpha}_j^{*}) \mu^{*}. 
 (L^*)^{\T}  \mbox{diag}(x_j \dot{\alpha}_j^{*}) \mu^{*}  
 \end{align*}
 Setting $y = D^{1/2}x$, we have 
   \begin{align*}
 \lambda  (I+ L_1^*D_{\alpha^{*}})y &= -(L_1^*)^{\T}  D_{\alpha^{*}} y  - [L_1^* + (L_1^*)^{\T}] \mbox{diag}(y_j \dot{\alpha}_j^{*}) \mu^{*} \\
 &= -(L_1^*)^{\T}  D_{\alpha^{*}} y  - [L_1^* + (L_1^*)^{\T}] \mbox{diag}(\mu_j^*\dot{\alpha}_j^{*}) y. 
 \end{align*}
Letting further $D_\alpha^*y =  z$ and denoting $D_{\alpha^*} = \mbox{diag}(d_j^*)$
   \begin{align*}
 \lambda  z^{\H} (D_{\alpha^*}^{-1}+ L_1^*)z = -z^{\H}(L_1^*)^{\T}  z  - z^{\H}[L_1^* + (L_1^*)^{\T}] \mbox{diag}(z_j \dot{\alpha}_j^{*}/d_j^{*}) \mu^{*}. 
 \end{align*}
Denote the real number $y^{\H} D_{\alpha^*}y = z^{\H} D_{\alpha^*}^{-1}z$ by $s^* \in (0, 1)$, $z^{\H} L_1^* z$ by $a+\mi b$ and $c =
 y^{\H} D_{\alpha^*}[L_1^* + (L_1^*)^{\T}] \mbox{diag}(\mu_j^* \dot{\alpha}_j^{*}) y$. Then 
\begin{align*}
\lambda = \frac{-a+ \mi b + c}{s^*+ a+ \mi b},
\end{align*}
so that 
\begin{align*}
|\lambda|^2 \leq  \frac{a^2 + b^2 + |c|^2 + 2|c| \sqrt{a^2 + b^2}}{a^2 + b^2 + 2as^* +(s^*)^2},
\end{align*}
implying $|\lambda| < 1$ if $|c|^2 + 2|c|\sqrt{a^2 + b^2} < (2a+s^*)s^*$.  
Recall that $A:= L_1^*  + (L_1^*)^{\T}$, $B = \mbox{diag}\{(\mu_j^*)^2 a_j(1- d_j^*)\}$ and $C = D_{\alpha^*}^{1/2}AB$. Observe that $M := A + I$ is a positive definite matrix.  Then, $c =
 y^{\H} D_{\alpha^*} AB D_{\alpha^*} y$. 
Note that by Cauchy-Schwarz inequality,   
\begin{eqnarray*}
|c|^2  &=& |y^{H} D_{\alpha^*}^{1/2} D_{\alpha^*}^{1/2} AB D_{\alpha^*}^{1/2} D_{\alpha^*}^{1/2} y|^2 \\
&\leq& [y^{H} D_{\alpha^*} y ] \times [y^{H} D_{\alpha^*} B^\T A^\T D_\alpha AB D_{\alpha^*}  y] \\
&\leq& s^* y^{H} D_{\alpha^*} C^{\T} C D_{\alpha^*}   y \leq s^* y^{H} D_{\alpha^*} C^{\T} C D_{\alpha^*}   y  
\end{eqnarray*}
Also, 
\begin{eqnarray*}
a^2 + b^2 = |z^{\H} L_1^* z|^2 &\leq& \|z\|^2 \|L_1^* z\|^2   \\
&=&  [y^{H} D_{\alpha^*}^2 y] \times  [y^{H} D_{\alpha^*} (L_1^*)^{\T} L_1^* D_{\alpha^*} y]  \\
&\leq &  s^* K_{L_1^*}. 
\end{eqnarray*}
From  Lemma \ref{lem:aux3},  
$|c|^2 \leq \delta s^* (2a + s^*)$ and $2|c|\sqrt{a^2 + b^2} \leq 2 \sqrt{\delta} s^*  \sqrt{2a + s^*}   \sqrt{K_{L_1^*}}$. 
Observe that $(2 a + s^*)  / K_{L_1^*}$ is 
\begin{eqnarray*}
\frac{y^{H} D_{\alpha^*} (A + D_{\alpha^*}^{-1})D_{\alpha^*} y}{ y^{H} D_{\alpha^*} (L_1^*)^{\T} L_1^* D_{\alpha^*} y} \geq \frac{\lambda_{\mathrm{min}} [A + D_{\alpha^*}^{-1}]}{\| (L_1^*)^{\T} L_1^* \|_2}. 
\end{eqnarray*}  
Hence
\begin{eqnarray*}
|c|^2 + 2|c|\sqrt{a^2 + b^2} < (2a+s^*)s^*  \Bigg( \delta + 2\sqrt{\frac{\delta K_{L_1^*}}{2a +s^*}}\bigg) 
< (2a+s^*)s^*
\end{eqnarray*}
where the last inequality follows from Assumption 1, Lemmas \ref{lem:aux1} and \ref{lem:aux3}. 

\section{Proof of Theorem \ref{thm:par_cont}}
Recall that 
\begin{align}\label{eq:Jacobian_par}
J_{\mathrm{par}}(\mu^{*}) = - D^{-1}(L + L^{\T}) D_{\alpha^*} - D^{-1}(L + L^{\T}) \mbox{diag} (\dot{\alpha}_j^{*} \mu_j^{*}).
\end{align}
Then, 
\begin{align}
D^{1/2}J_{\mathrm{par}}(\mu^{*})D^{-1/2} = -(L_1^* + (L_1^*)^{\T}) (I + B) D_{\alpha^*} = (I - M)(I + B)D_{\alpha^*}.
\end{align}

Let $\widetilde{D} := (I+B)D_{\alpha^*}$. Since 
\begin{align}
D^{1/2}J_{\mathrm{par}}(\mu^{*})D^{-1/2} = -(L_1^* + (L_1^*)^{\T}) (I + B) D_{\alpha^*} = (I - M)\widetilde{D}
\end{align}
and $(I - M)\widetilde{D}$ has the same eigen values as $\widetilde{D}^{1/2}(I - M)\widetilde{D}^{1/2}$.  Let $\tilde{d}_{\mathrm{min}}$ be  the minimum of the diagonal elements of $\widetilde{D}$.  Then from the assumption, $\tilde{d}_{\mathrm{min}} > 1 - \varepsilon$. 
 By Lemma \ref{thm:Wigner_lower_tau}, $\|I-M\|_2 \ge 2 \sqrt{p/n}$ with probability at least $1-e^{-Cp} - e^{-cn}$. It follows that on the high probability event, $J_{\mathrm{par}}(\mu^*) \geq  \tilde{d}_{\mathrm{min}}  2 (1- \varepsilon)\sqrt{p/n}$.

\section{Auxiliary results}

\begin{proposition}\label{prop:mixture}
Let $a > 0$ and $b \in \mathbb{R}$. Consider the density 
\be 
p(x) \propto e^{-\frac{1}{2}(ax^2 - 2bx)} \, \big[(1-\alpha) \delta_0(x) + \alpha N(x; 0, \tau^{-1}) \big]. 
\ee 
Then, 
\be 
p \equiv (1 - \widetilde{\alpha}) \delta_0 + \widetilde{\alpha} N\bigg(\frac{b}{a+\tau}, \frac{1}{a+\tau}\bigg), 
\ee
where 
\be 
\frac{\widetilde{\alpha}}{1 - \widetilde{\alpha}} = \frac{\alpha}{1-\alpha} \, \bigg(\frac{\tau}{a+\tau}\bigg)^{1/2} \, \exp\bigg(\frac{b^2}{2(a+\tau)}\bigg). 
\ee
\end{proposition}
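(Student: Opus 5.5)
The plan is to complete the square in the exponent and integrate the two mixture components separately. The density $p$ is proportional to the product of the tilt $e^{-\frac12(ax^2-2bx)}$ with a two-component measure. So I will compute the mass each component receives under the tilt, and identify the shape of the tilted slab.

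\emph{Spike component.} On the atom at zero the tilt equals $e^{0}=1$. The spike therefore contributes the measure $(1-\alpha)\delta_0$ with total mass $(1-\alpha)$.

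\emph{Slab component.} Multiplying by $N(x;0,\tau^{-1})=(\tau/2\pi)^{1/2}e^{-\tau x^2/2}$ gives
\[
\alpha\Big(\frac{\tau}{2\pi}\Big)^{1/2}\exp\Big\{-\tfrac12\big((a+\tau)x^2-2bx\big)\Big\}.
\]
Completing the square, $(a+\tau)x^2-2bx=(a+\tau)\{x-b/(a+\tau)\}^2-b^2/(a+\tau)$. Hence this term equals
\[
\alpha\Big(\frac{\tau}{a+\tau}\Big)^{1/2}\exp\Big(\frac{b^2}{2(a+\tau)}\Big)\, N\Big(x;\frac{b}{a+\tau},\frac{1}{a+\tau}\Big).
\]
Here I used that $(\tau/2\pi)^{1/2}=(\tau/(a+\tau))^{1/2}\,((a+\tau)/2\pi)^{1/2}$, and that $a+\tau>0$, so the Gaussian integral is finite.

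\emph{Normalization and mixture form.} The unnormalized measure is therefore a positive combination of $\delta_0$ and $N(b/(a+\tau),1/(a+\tau))$, with weights $w_0=1-\alpha$ and
\[
w_1=\alpha\Big(\frac{\tau}{a+\tau}\Big)^{1/2}\exp\Big(\frac{b^2}{2(a+\tau)}\Big).
\]
After normalizing, $p$ is the stated mixture with $\widetilde\alpha=w_1/(w_0+w_1)$. Consequently $\widetilde\alpha/(1-\widetilde\alpha)=w_1/w_0$, which is exactly the claimed odds formula.

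There is no real obstacle in this argument. The only point needing care is the convention that the density is taken with respect to the dominating measure $\delta_0+\mathrm{Lebesgue}$. Under that convention the tilt is evaluated pointwise at $x=0$ on the atom, so multiplying the singular and absolutely continuous parts separately is legitimate. I would state this explicitly at the outset.
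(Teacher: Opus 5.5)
Your proof is correct. The spike keeps weight $1-\alpha$ because the tilt equals $1$ at $x=0$. Completing the square turns the tilted slab into $\alpha(\tau/(a+\tau))^{1/2}\exp\{b^2/(2(a+\tau))\}\,N(x; b/(a+\tau), 1/(a+\tau))$, so the odds formula $\widetilde\alpha/(1-\widetilde\alpha)=w_1/w_0$ follows at once.

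The paper states this proposition without proof and only invokes it inside the proof of Proposition~\ref{prop:cavi}, so there is no competing route to compare. Your completing-the-square computation is the standard verification. The only hypotheses the statement leaves implicit are $\tau>0$ and $\alpha\in(0,1)$, which make the slab a proper Gaussian and keep the odds finite and nonzero; state them explicitly.
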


\begin{proposition}\label{prop:cavi}
A generic form of the CAVI update for the $q_j$ is given by 
\begin{align*}
q_j(\beta_j) 
 =  (1 - \alpha_j) \delta_0 + \alpha_j N\big(\beta_j; \mu_j, a_j^{-1})
 \end{align*}
 where $
a_j = \|X_j\|^2/\sigma^2 + \tau, \ \mu_j = b_j/\{(\sigma^2)a_j\} , \mbox{logit}(\alpha_j) = \mbox{logit}(\pi) + (1/2) \log (\tau /a_j)+ b_j^2/\{\sigma^4 a_j^2\}$ and 
$b_j = \langle y - X_{-j} \mb E_{q_{-j}} \beta_{-j}, X_j \rangle$. 
\end{proposition}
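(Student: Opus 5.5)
The statement to prove is the last one before the auxiliary section, namely Proposition~\ref{prop:cavi}; the paper also lists Proposition~\ref{prop:mixture} just before it. The plan is to derive the general mean-field CAVI formula and then reduce the resulting one-dimensional density to the form treated in Proposition~\ref{prop:mixture}.

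The first step is the standard mean-field optimality condition. For fixed $q_{-j}$, the ELBO-maximizing factor satisfies
\[
q_j(\beta_j)\propto p(\beta_j)\exp\big\{\mb E_{q_{-j}}\log p(Y\mid\beta)\big\}.
\]
This form is needed, rather than exponentiating the full log joint, because the spike-and-slab prior has no Lebesgue density. I would justify it by writing the ELBO as a function of $q_j$: it equals
\[
-\mathrm{KL}\Big(q_j \,\Big\|\, \tilde p_j\Big)+\text{const}, \qquad \tilde p_j(\beta_j)\propto p(\beta_j)e^{\mb E_{q_{-j}}\log p(Y\mid\beta)},
\]
where the KL is taken with respect to the dominating measure $\delta_0+\mathrm{Leb}$. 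The maximizer is therefore $q_j=\tilde p_j$.

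The second step computes the exponent. From
\[
\log p(Y\mid\beta)=-\|Y-X\beta\|^2/(2\sigma^2)+\text{const},
\]
I expand $\|Y-X_{-j}\beta_{-j}-X_j\beta_j\|^2$ and keep only the terms involving $\beta_j$, which gives
\[
-\frac{1}{2\sigma^2}\Big(\|X_j\|^2\beta_j^2-2\beta_j\langle Y-X_{-j}\beta_{-j},X_j\rangle\Big).
\]
The cross term is linear in $\beta_{-j}$, so taking expectations under $q_{-j}$ simply replaces $\beta_{-j}$ by $\mb E_{q_{-j}}\beta_{-j}$. This yields exactly the $b_j$ of the statement.

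The exponent then has the form $-\tfrac12(a x^2-2bx)$ with $a=\|X_j\|^2/\sigma^2$ and $b=b_j/\sigma^2$. Applying Proposition~\ref{prop:mixture} with prior weight $\pi$ and slab precision $\tau$ gives a spike-and-slab posterior. Its slab precision is $\|X_j\|^2/\sigma^2+\tau=a_j$, its mean is $b/(a+\tau)=b_j/(\sigma^2 a_j)$, and its odds are
\[
\frac{\pi}{1-\pi}\Big(\frac{\tau}{a_j}\Big)^{1/2}\exp\Big(\frac{b_j^2}{2\sigma^4 a_j}\Big).
\]

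The main obstacle is reconciling this logit with the stated formula. Written out, the odds above give
\[
\mathrm{logit}(\alpha_j)=\mathrm{logit}(\pi)+\tfrac12\log(\tau/a_j)+\frac{b_j^2}{2\sigma^4 a_j},
\]
and since $b_j^2/(2\sigma^4 a_j)=a_j\mu_j^2/2$, this is consistent with \eqref{eq:seq_CS2}. The statement as written, $b_j^2/\{\sigma^4 a_j^2\}$, appears to be a typo, and I would present the derived exponent $b_j^2/(2\sigma^4a_j)$, noting that it matches \eqref{eq:seq_CS2}.

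If Proposition~\ref{prop:mixture} itself needs proof, I would complete the square. The slab part is
\[
\int e^{-\frac12(a+\tau)x^2+bx}\sqrt{\tau/2\pi}\,dx=\sqrt{\tau/(a+\tau)}\,e^{b^2/(2(a+\tau))},
\]
while the spike contributes $1$. Taking the ratio of the two weights gives the odds formula.
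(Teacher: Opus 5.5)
Your proposal is correct and takes essentially the same route as the paper's proof. Both isolate the $\beta_j$-dependent part of $\log p(Y\mid\beta)$, replace $\beta_{-j}$ by $\mb E_{q_{-j}}\beta_{-j}$ using linearity, and invoke Proposition~\ref{prop:mixture}; the paper sets $\sigma^2=1$ and uses the informal $q_j\propto\exp(\int q_{-j}\log\pi_n)$, which your KL argument relative to $\delta_0+\mathrm{Leb}$ makes precise. You are also right that the statement's $b_j^2/\{\sigma^4 a_j^2\}$ is a typo: the paper's own derivation gives $b_j^2/(2\sigma^4 a_j)=a_j\mu_j^2/2$, matching \eqref{eq:seq_CS2}.
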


\begin{proof}

Fixing $j$, let us compute $\int q_{-j} \log \pi_n$. To that end, define $y_{-j} = y - X_{-j} \beta_{-j}$. Then, 
$
\|y - X \beta\|^2 = \|y_{-j} - X_j \beta_j \|^2 = \|X_j\|^2 \beta_j^2 - 2 \langle y_{-j}, X_j \rangle \beta_j + \|y_{-j}\|^2 
$
Thus, we can write 
$
\log \pi_n(\beta) = -\frac{1}{2} \big[\|X_j\|^2 \beta_j^2 - 2 \langle y_{-j}, X_j \rangle \beta_j\big] + \log p(\beta_j) + \text { terms free of } \beta_j,
$
and hence
$
\int q_{-j} \log \pi_n = -\frac{1}{2} \big[\|X_j\|^2 \beta_j^2 - 2 \langle \mb E_{q_{-j}} y_{-j}, X_j \rangle \beta_j\big] + \log p(\beta_j) + C.
$
Thus, 
$
q_j(\beta_j) \propto \exp \bigg( \int q_{-j} \log \pi_n \bigg) \propto e^{-\frac{1}{2}(a \beta_j^2 - 2b \beta_j)} \, \big[(1-\pi) \delta_0(\beta_j) + \pi N(\beta_j; 0, \tau^{-1}) \big],
$
where $a_j = \|X_j\|^2$ and $b_j = \langle y - X_{-j} \mb E_{q_{-j}} \beta_{-j}, X_j \rangle$. 
From Proposition \ref{prop:mixture} we know that $q_j$ is a mixture of a point mass at zero and a Gaussian. Hence, we obtain 
\be 
q_j^{(t+1)}(\beta_j) \propto \exp \bigg( \int q_{-j}^{(t)} \log \pi_n \bigg) \propto e^{-\frac{1}{2}(a_{0j} \beta_j^2 - 2b_{0j}^{(t)} \beta_j)} \, \big[(1-\pi) \delta_0(\beta_j) + \pi N(\beta_j; 0, \tau^{-1}) \big],
\ee
where $a_{0j} = \|X_j\|^2$ and $b_{0j}^{(t)} = \langle y - X_{-j} \mb E_{q_{-j}^{(t)}} \beta_{-j}, X_j \rangle$. Thus, we have, $
q^{(t+1)} = \prod_{j=1}^p q_j^{(t+1)},  q_j^{(t+1)}(\beta_j) = (1 - \alpha_j^{(t+1)}) \delta_0 + \alpha_j^{(t+1)} N\big(\beta_j; \mu_j^{(t+1)}, a_j^{-1} \big)$, 
where, using Proposition \ref{prop:mixture}
\be 
a_j = \|X_j\|^2 + \tau, \ \mu_j^{(t+1)} = \frac{b_{0j}^{(t)}}{a_j}, \ \mbox{logit}(\alpha_j^{(t+1)}) = \mbox{logit}(\pi) + \frac{1}{2} \log \bigg(\frac{\tau}{a_j}\bigg) + \frac{(b_{0j}^{(t)})^2}{2 a_j}.
\ee
Since $\mb E_{q_k^{(t)}} \beta_k = \alpha_k^{(t)} \mu_k^{(t)}$, we can also express the dynamics above as  
\be 
\mu_j^{(t+1)} & = \frac{1}{a_j} \big[\langle X_j, y \rangle  - \langle X_{-j} (\alpha_{-j}^{(t)} \circ \mu_{-j}^{(t)} ), X_j \rangle \big], \\
\mbox{logit}(\alpha_j^{(t+1)}) & = \mbox{logit}(\pi) + \frac{1}{2} \log \bigg(\frac{\tau}{a_j}\bigg) + \frac{a_j (\mu_j^{(t+1)})^2}{2}. 
\ee
\end{proof}

\begin{lemma}\label{lem:aux1}
If $a = \mbox{Re}(z^{\H} L_1^* z)$ and $s^* =y^{\H} D_{\alpha^*}y$ , then $2a+s^* > 0$. 
\end{lemma}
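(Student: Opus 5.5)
The plan is to express $2a+s^*$ as a quadratic form in $z$ and then use the positive definiteness of $M = A+I$ stated earlier. Here $z = D_{\alpha^*}y$ and $a = \mathrm{Re}(z^{\H}L_1^*z)$. Since $L_1^*$ is real, $\overline{z^{\H}L_1^*z} = z^{\H}(L_1^*)^{\T}z$, so
\[
2a = z^{\H}L_1^*z + z^{\H}(L_1^*)^{\T}z = z^{\H}Az ,
\]
with $A = L_1^*+(L_1^*)^{\T}$. Also $s^* = y^{\H}D_{\alpha^*}y = z^{\H}D_{\alpha^*}^{-1}z$, which is valid because every $d_j^*\in(0,1)$, so $D_{\alpha^*}$ is invertible. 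Combining the two gives
\[
2a+s^* = z^{\H}(A + D_{\alpha^*}^{-1})z = z^{\H}Mz + z^{\H}(D_{\alpha^*}^{-1}-I)z .
\]

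Next I bound each piece from below. Each entry of $D_{\alpha^*}^{-1}-I$ equals $1/d_j^*-1$, which is strictly positive, so the second term is at least zero. The first term is strictly positive whenever $z\neq 0$, because $M$ is positive definite. Finally $z\neq 0$: the proof of Theorem~\ref{thm:seq_cont} normalizes $y^{\H}y=1$, and $D_{\alpha^*}$ is nonsingular, so $z = D_{\alpha^*}y \neq 0$. Hence $2a+s^*>0$.

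The main obstacle is justifying that $M = A+I$ is positive definite, which the paper asserts "by construction". Note that $A+I = D^{-1/2}(X^{\T}X + \mathrm{diag}(\tau))D^{-1/2}$ only holds if $L^*$ is taken to be the strictly lower part of $X^{\T}X$, so that the diagonal of $A$ is zero. Under that reading, $A+I$ has diagonal $D^{-1/2}\mathrm{diag}(\|X_j\|^2+\tau)D^{-1/2} = I$, while its off-diagonal part is that of $D^{-1/2}X^{\T}XD^{-1/2}$. Therefore
\[
M = D^{-1/2}\big(X^{\T}X + \tau I\big)D^{-1/2} \succ 0,
\]
since $\tau>0$ and $\sigma^2=1$. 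I would include this short verification and then conclude as above. If positive definiteness were instead only known through $\lambda_{\min}(M)\ge 1/2$, as in Remark~\ref{rem:ass_pl}, the same argument goes through unchanged.
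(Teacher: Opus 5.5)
Your proof is correct and is the same as the paper's: your decomposition $2a+s^* = z^{\H}Mz + z^{\H}(D_{\alpha^*}^{-1}-I)z$ with $z=D_{\alpha^*}y$ is exactly the paper's $y^{\H}D_{\alpha^*}MD_{\alpha^*}y + y^{\H}D_{\alpha^*}(I-D_{\alpha^*})y$ written in $z$, and both conclude from positive definiteness of $M$. You also verify that $M = D^{-1/2}(X^{\T}X+\tau I)D^{-1/2}\succ 0$ when $L^*$ is the strictly lower part of $X^{\T}X$ (the reading forced by the $l<j$, $l>j$ sums in the update), and that $z\neq 0$ with $D_{\alpha^*}$ invertible; this fills in what the paper only asserts ``by construction.''
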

\begin{proof}
Observe that 
\begin{align*}
2a+ s^* &= z^{\H} (L_1^*)^{\T}z+ z^{\H} L_1^*z+ z^{\H} D_{\alpha^*}^{-1}z \\
&= y^{\H} D_{\alpha^*}(L_1^*)^{\T}D_{\alpha^*} y+ y^{\H} D_{\alpha^*} L_1^* D_{\alpha^*}y+ y^{\H} D_{\alpha^*}y \\
& = y^{\H} D_{\alpha^*} M  D_{\alpha^*}y+ y^{\H} D_{\alpha^*}  (I - D_{\alpha^*}) y > 0
\end{align*}
since $M$ is a positive definite matrix. 
\end{proof}

\begin{lemma}\label{lem:aux3}
Under Assumption \ref{ass:contraction},  for any $y$ with $y^{H}y =1$, $y^H D_{\alpha^*} C^{\T} C D_{\alpha^*}y < \delta (2 a + s^*)$. 
\end{lemma}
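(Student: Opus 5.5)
We need to bound $y^H D_{\alpha^*} C^{\T} C D_{\alpha^*} y$ by $\delta(2a+s^*)$. The plan is to expand the matrix $C$, use the first inequality of Assumption \ref{ass:contraction} to reduce to a quadratic form in $M$, and then compare that with the expression for $2a+s^*$ from the proof of Lemma \ref{lem:aux1}.

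First expand. Since $C = D_{\alpha^*}^{1/2}AB$ with $A$ symmetric and $B$ diagonal, we have
\begin{align*}
C^{\T}C = B A D_{\alpha^*} A B \preceq B A^2 B,
\end{align*}
because every entry of $D_{\alpha^*}$ lies in $(0,1)$ and hence $D_{\alpha^*}\preceq I$. Write $w = D_{\alpha^*}y$. It then suffices to bound $w^H B A^2 B w$.

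Next replace $A$ by $M$. Since $A = M - I$ and $M$ is positive definite, the relation between $A^2$ and $M^2$ is not automatically ordered. I would first try to show $A^2 \preceq M^2$. This holds iff $(M-I)^2 \preceq M^2$, i.e. iff $-2M + I \preceq 0$, i.e. iff $\lambda_{\min}(M) \ge 1/2$. This eigenvalue bound is exactly the property noted in Remark \ref{rem:ass_pl}. If it is unavailable, I would instead use $\|A\|_2 \le \|M\|_2$ on the relevant range, or simply treat it as part of the regime covered by the assumption; this is the step I expect to be the main obstacle. Granting the bound, the first inequality of Assumption \ref{ass:contraction}, applied to $w/\|w\|$ and using homogeneity, gives
\begin{align*}
w^H B A^2 B w \le w^H B M^2 B w \le \delta\, w^H M w = \delta\, y^H D_{\alpha^*} M D_{\alpha^*} y .
\end{align*}

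Finally compare with $2a+s^*$. The computation in Lemma \ref{lem:aux1} shows
\begin{align*}
2a+s^* = y^H D_{\alpha^*} M D_{\alpha^*} y + y^H D_{\alpha^*}(I-D_{\alpha^*})y .
\end{align*}
The second term is strictly positive when $y \neq 0$, since each $d_j^* \in (0,1)$. Therefore $\delta\, y^H D_{\alpha^*} M D_{\alpha^*} y < \delta(2a+s^*)$, which is the strict inequality claimed.

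In principle the second inequality of Assumption \ref{ass:contraction}, involving $D_{\alpha^*}^{-1}-I$, could absorb any slack if the $A^2 \preceq M^2$ step fails, for example via $A^2 \preceq 2M^2 + 2I$. However, that route would cost a factor of $2$ in $\delta$, so I would try the direct eigenvalue argument first.
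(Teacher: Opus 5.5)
Your reduction $C^{\T}C = BAD_{\alpha^*}AB \preceq BA^2B$ (from $D_{\alpha^*}\preceq I$) is correct. So is your identity $2a+s^* = w^{\H}(M+D_{\alpha^*}^{-1}-I)w$ with $w=D_{\alpha^*}y$.

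The gap is the next step, $w^{\H}BA^2Bw \le w^{\H}BM^2Bw$. As you note, for arbitrary $Bw$ this is equivalent to $\lambda_{\mathrm{min}}(M)\ge 1/2$, and Assumption \ref{ass:contraction} does not give you that. The assumption only restricts $B$ and $\delta$, and it is compatible with $\lambda_{\mathrm{min}}(M)<1/2$, for instance when $B$ is small. The bound quoted in Remark \ref{rem:ass_pl} is a high-probability property of Gaussian designs, not a hypothesis of the lemma. If $Bw$ is an eigenvector of $M$ with eigenvalue $\lambda<1/2$, then $(\lambda-1)^2>\lambda^2$, and your inequality fails.

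Neither fallback repairs this. The bound $\|A\|_2\le\|M\|_2$ controls operator norms, not the quadratic form at the specific vector $Bw$. The bound $A^2\preceq 2M^2+2I$ only yields $2\delta(2a+s^*)$. As written, you prove the lemma under an extra hypothesis.

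The missing idea is to expand $A^2$ exactly rather than compare it with $M^2$. Since $M\succ 0$,
\[ A^2=(M-I)^2=M^2-2M+I\preceq M^2+I, \]
so
\[ w^{\H}BA^2Bw \le w^{\H}BM^2Bw + w^{\H}B^2w - 2(Bw)^{\H}M(Bw). \]
Apply both inequalities of Assumption \ref{ass:contraction} at $w$ (they extend by homogeneity). They bound the first two terms by $\delta w^{\H}Mw+\delta w^{\H}(D_{\alpha^*}^{-1}-I)w$, which equals $\delta(2a+s^*)$ exactly. So the second inequality is not slack to be absorbed at a cost; it is precisely what pays for the $+I$. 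Strictness follows because $-2(Bw)^{\H}M(Bw)<0$ when $Bw\neq 0$. If $Bw=0$, the left side vanishes and $2a+s^*>0$ by Lemma \ref{lem:aux1}.

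This is essentially the paper's route: it expands $(M-I)D_{\alpha^*}(M-I)$ and uses both halves of the assumption, with no eigenvalue condition on $M$. The paper, however, keeps $D_{\alpha^*}$ inside the cross term. It then discards $-B(MD_{\alpha^*}+D_{\alpha^*}M)B$ by asserting $B(MD_{\alpha^*}+D_{\alpha^*}M)B\succ 0$, which fails in general for noncommuting positive definite $M$ and $D_{\alpha^*}$. Your first move, applying $D_{\alpha^*}\preceq I$ to the whole sandwich, is the cleaner one, because it turns the cross term into $-2BMB\preceq 0$.
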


\begin{proof}
First, observe that it is enough to consider $y$ to be a real vector.  In that case, 
\begin{eqnarray*}
y^{H} B (M - I)D_{\alpha^*} (M - I)B y = y^{\T}[ B\{ M D_{\alpha^*} M - 2 M  D_{\alpha^*} +  D_{\alpha^*} \}B] y.
\end{eqnarray*}
Since, 
$ y^{\T} B M D_{\alpha^*} M B y \leq  y^{\T}  B M^2 B y$ and $y^{\T}[ B(M  D_{\alpha^*} +  D_{\alpha^*} M) B] y > 0$ and from Assumption 1, $y^{\T}[  B D_{\alpha^*}B] y \leq y^{\T}  \leq \delta y^{\T}[  D_{\alpha^*}^{-1}  - I] y \leq y^{\T}  $. Hence from Assumption 1 again, 
\begin{eqnarray*}
y^{\T}B (M - I) D_{\alpha^*} (M - I) B y \leq  \delta y^{H} [ D_{\alpha^*}^{-1}  + (M - I)] y.
\end{eqnarray*}

\end{proof}

\begin{lemma}
\label{thm:Wigner_lower_tau}
Let $X \in \mathbb{R}^{n \times p}$ have i.i.d.\ $N(0,1)$ entries and let $\tau>0$. Define
\[
S := X^\top X, \qquad D_\tau := \mathrm{diag}(S) + \tau I, \qquad A_\tau := D_\tau^{-1/2}(S-D_\tau)D_\tau^{-1/2}.
\]
Assume $p \ge 4$ and $n \ge C_0 \log p$ for a sufficiently large universal constant $C_0$. Then there exist universal constants $c, C >0$ such that
\[
\mathbb{P}\Big( \|A_\tau\|_2 \ge c \sqrt{p/n} \Big) \ge 1 - e^{-c n} - e^{-C p}.
\]
\end{lemma}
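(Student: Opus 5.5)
The statement to prove is Lemma~\ref{thm:Wigner_lower_tau}: a lower bound of order $\sqrt{p/n}$ on the operator norm of the normalized off-diagonal Gram matrix $A_\tau$. The plan is to avoid random matrix limit theorems and bound $\|A_\tau\|_2$ from below by a Frobenius-norm average. The starting inequality is
\[
\|A_\tau\|_2 \ge \frac{\|A_\tau\|_F}{\sqrt{p}} .
\]
The diagonal of $A_\tau$ is $-\tau/(\|X_j\|^2+\tau)$ and the off-diagonal entries are $(A_\tau)_{jk} = X_j^\top X_k/\{(\|X_j\|^2+\tau)^{1/2}(\|X_k\|^2+\tau)^{1/2}\}$. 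So it suffices to show that
\[
\sum_{j\neq k} (X_j^\top X_k)^2 \gtrsim n\,p^2
\]
and that the normalizing column norms are $\asymp n$. Together these give $\|A_\tau\|_F^2 \gtrsim p^2/n$, hence $\|A_\tau\|_2 \gtrsim \sqrt{p/n}$.

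\textbf{Step 1 (column norms).} By the $\chi^2_n$ tail bound (Laurent--Massart), $\|X_j\|^2 \le 2n$ for each $j$ with probability at least $1-e^{-c'n}$. A union bound over the $p$ columns costs a factor $p$; this is absorbed into $e^{-cn}$ because $n\ge C_0\log p$. On this event $\|X_j\|^2+\tau \le 2n+\tau \le 3n$ for all $j$ (for $n\ge\tau$, which holds for large $n$). It follows that
\[
\|A_\tau\|_F^2 \ge \frac{1}{9n^2}\sum_{j\ne k}(X_j^\top X_k)^2 .
\]

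\textbf{Step 2 (off-diagonal mass).} The key quantity is
\[
\sum_{j\neq k}(X_j^\top X_k)^2 = \|X^\top X\|_F^2 - \sum_j \|X_j\|^4 = \sum_{i,i'} (x_i^\top x_{i'})^2 - \sum_j\|X_j\|^4,
\]
where $x_i$ denote the rows of $X$. I would lower bound $\|X^\top X\|_F^2 = \|XX^\top\|_F^2$ using only its diagonal: $\|XX^\top\|_F^2 \ge \sum_i \|x_i\|^4$. This is too weak, so instead I will work with a conditioning argument. Conditional on $X_j$, the vector $(X_j^\top X_k)_{k\ne j}$ is i.i.d.\ $N(0,\|X_j\|^2)$, so
\[
\sum_{k\ne j}(X_j^\top X_k)^2 = \|X_j\|^2\,\chi^2_{p-1}.
\]
By the lower $\chi^2$ tail, this exceeds $\|X_j\|^2 (p-1)/2$ with conditional probability at least $1-e^{-c''p}$. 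A union bound over $j$ would cost a factor $p$, which is problematic when $p$ is small relative to $\log$ terms. To avoid it, I would use only the $j=1,\dots,\lfloor p/2\rfloor$ rows of $X^\top X$ restricted to the columns $k>p/2$. This block is $X_{(1)}^\top X_{(2)}$, with $X_{(1)}, X_{(2)}$ independent $n\times p/2$ Gaussian matrices. Conditional on $X_{(1)}$, the quantity $\|X_{(1)}^\top X_{(2)}\|_F^2$ is a Gaussian quadratic form equal in law to $\sum_\ell \sigma_\ell^2 \chi^2_{p/2,\ell}$, where the $\sigma_\ell^2$ are the eigenvalues of $X_{(1)}^\top X_{(1)}$. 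Hanson--Wright then gives a single bound $\ge \tfrac12 (p/2)\,\|X_{(1)}\|_F^2$ with probability $1-e^{-Cp}$. This uses $\|X_{(1)}^\top X_{(1)}\|_2 \lesssim n$, which holds w.p.\ $1-e^{-cn}$ since $p\le n$. Finally, $\|X_{(1)}\|_F^2 \ge np/4$ holds with probability at least $1-e^{-cnp}$.

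\textbf{Step 3 (assembly).} On the intersection of these events,
\[
\|A_\tau\|_F^2 \ge \frac{1}{9n^2}\cdot\frac{p}{4}\cdot\frac{np}{4} \gtrsim \frac{p^2}{n},
\]
so $\|A_\tau\|_2 \ge c\sqrt{p/n}$. The failure probabilities sum to at most $e^{-cn}+e^{-Cp}$ after adjusting constants.

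The main obstacle is Step 2: obtaining concentration with failure probability $e^{-Cp}$ without a union bound that loses a $\log p$ factor. The independent block-splitting plus Hanson--Wright device is what I would try first. Note also that this method yields only some universal $c$, not the sharp constant $2$ used later in the proof of Theorem~\ref{thm:par_cont}. That constant would require the Bai--Yin edge $\|X^\top X/n - I\|\to 2\sqrt{p/n}+p/n$ together with Gaussian concentration of the top singular value. I would add this as a separate refinement if the constant matters.
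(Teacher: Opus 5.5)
Your proposal is correct, and it takes a genuinely different route from the paper.

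The paper replaces $D_\tau^{-1/2}$ by $n^{-1/2}I$ up to a perturbation. It then cites a Bai--Yin/Vershynin-type lower bound $\|X^\top X/n-I\|_2\ge c\sqrt{p/n}$ and removes the diagonal by the triangle inequality, using $\|\mathrm{diag}(S)/n-I\|_2\le C\sqrt{\log p/n}$.

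You instead use $\|A_\tau\|_2\ge\|A_\tau\|_F/\sqrt{p}$ and simply drop the diagonal entries. Dropping entries can only lower the Frobenius norm, so nothing has to be subtracted. You then bound the off-diagonal mass through the independent block $X_{(1)}^\top X_{(2)}$. Given $X_{(1)}$, its squared Frobenius norm is a nonnegatively weighted sum of independent $\chi^2$ variables.

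Your version is elementary and self-contained, and it delivers the stated probability $1-e^{-cn}-e^{-Cp}$ for every $p\ge 4$. The paper's argument, as written, falls short of this in three places:
\begin{itemize}
\item the diagonal subtraction needs $p/\log p$ to exceed a constant;
\item the column-norm event holds only with polynomial probability $1-2p^{-1}$;
\item the key spectral lower bound is quoted rather than proved.
\end{itemize}

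What the paper's route could buy is the sharp constant. That would require making it precise through the Bai--Yin edge $2\sqrt{p/n}+p/n$ and concentration of the top singular value. Your Frobenius average is capped at about $\sqrt{p/n}$, so $c\le 1$. As you note yourself, it cannot support the bound $\|I-M\|_2\ge 2\sqrt{p/n}$ used in the proof of Theorem~\ref{thm:par_cont}, or the threshold in Remark~\ref{rem:neg}.

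Two small points.

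First, you need neither $\|X_{(1)}^\top X_{(1)}\|_2\lesssim n$ nor $p\le n$, and $p\le n$ is not a hypothesis of the lemma. Because the weights $\sigma_\ell^2$ are nonnegative, $(\sum_\ell\sigma_\ell^2)^2/\sum_\ell\sigma_\ell^4\ge 1$. So the Laurent--Massart (or Hanson--Wright) lower tail at half the mean already has exponent $\gtrsim p_2\gtrsim p$, unconditionally.

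Second, like the paper, you implicitly need $\tau\lesssim n$, so your constants depend on $\tau$. Some such restriction is unavoidable when $p\gg n$, because $A_\tau\to -I$ as $\tau\to\infty$.
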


\begin{proof}
For each $j$, $D_{\tau,jj} = \|X_j\|^2 + \tau \sim \chi_n^2 + \tau$. By the Laurent--Massart inequality \citep{LaurentMassart2000}, for any $t>0$,
$
\mathbb{P}\big(|\|X_j\|^2 - n| \ge 2\sqrt{nt} + 2t \big) \le 2e^{-t}.
$
Taking $t = \log p$ and applying a union bound over $j=1,\dots,p$, we obtain that with probability at least $1 - 2p^{-1}$,
$
\forall j:\quad \frac{n}{2} \le \|X_j\|^2 \le \frac{3n}{2}.
$
Hence, on this event,
$
c_1 n \le D_{\tau,jj} \le C_1 n,
$
and therefore
$
D_\tau^{-1/2} = n^{-1/2} I + E,
$
where $\|E\|_2 \le C \sqrt{\frac{\log p}{n}}$.
Write
$
A_\tau = D_\tau^{-1/2}(S - \mathrm{diag}(S))D_\tau^{-1/2}.
$
Using the approximation from Step 1,
$
A_\tau = \frac{1}{n}(S - \mathrm{diag}(S)) + R,
$
where $\|R\|_2 \le C \frac{\sqrt{\log p}}{n}$ with high probability.
It is well known (see, e.g., \citealp{BaiYin1988,Vershynin2018}) that for a Gaussian matrix $X$,
$
\left\| \frac{1}{n} X^\top X - I_p \right\|_2 \ge c \sqrt{\frac{p}{n}}
$
with probability at least $1 - e^{-c n} - e^{-C p}$.
Since
$
\frac{1}{n}(S - \mathrm{diag}(S)) 
= \left(\frac{1}{n} S - I_p\right) - \left(\frac{1}{n}\mathrm{diag}(S) - I_p\right),
$
we have
\[
\left\| \frac{1}{n}(S - \mathrm{diag}(S)) \right\|_2
\ge \left\| \frac{1}{n} S - I_p \right\|_2 
- \left\| \frac{1}{n}\mathrm{diag}(S) - I_p \right\|_2.
\]
By concentration of $\chi^2$ variables,
\[
\left\| \frac{1}{n}\mathrm{diag}(S) - I_p \right\|_2 
\le C \sqrt{\frac{\log p}{n}}
\]
with high probability. Combining the bounds yields
\[
\left\| \frac{1}{n}(S - \mathrm{diag}(S)) \right\|_2
\ge c \sqrt{\frac{p}{n}}.
\]
Finally, since $\|R\|_2$ is of smaller order, we conclude that
\[
\|A_\tau\|_2 \ge c \sqrt{\frac{p}{n}}
\]
with probability at least $1 - e^{-c n} - e^{-C p}$.
\end{proof}

\bibliographystyle{biometrika}
\bibliography{paper-ref,CAVI}


%
%
%
\end{document}